\newtheorem{theorem}{Theorem}
\newtheorem{lem}{Lemma}
\title{Correct-by-Construction Approach for Self-Evolvable Robots}
\author{Gang Chen
    \affiliation{
    Department of Mechanical and Aerospace Engineering\\
	University of California, Davis\\
	One Shields Avenue,Davis, CA\\
    ggchen@ucdavis.edu
    }	
}
\author{Zhaodan Kong    
    \affiliation{
    Department of Mechanical and Aerospace Engineering\\
	University of California, Davis\\
	One Shields Avenue,Davis, CA\\
	zdkong@ucdavis.edu
    }
}
\begin{document}

\maketitle    

%%%%%%%%%%%%%%%%%%%%%%%%%%%%%%%%%%%%%%%%%%%%%%%%%%%%%%%%%%%%%%%%%%%%%%
\begin{abstract}
The paper presents a new formal way of modeling and designing reconfigurable robots, in which case the robots are allowed to reconfigure not only structurally but also functionally. We call such kind of robots ``self-evolvable'', which have the potential to be more flexible to be used in a wider range of tasks, in a wider range of environments, and with a wider range of users. To accommodate such a concept, i.e., allowing a self-evovable robot to be configured and reconfigured, we present a series of formal constructs, e.g., structural reconfigurable grammar and functional reconfigurable grammar. Furthermore, we present a correct-by-construction strategy, which, given the description of a workspace, the formula specifying a task, and a set of available modules, is capable of constructing during the design phase a robot that is guaranteed to perform the task satisfactorily. We use a planar multi-link manipulator as an example throughout the paper to demonstrate the proposed modeling and designing procedures. 
\end{abstract}

%%%%%%%%%%%%%%%%%%%%%%%%%%%%%%%%%%%%%%%%%%%%%%%%%%%%%%%%%%%%%%%%%%%%%%

%%%%%%%%%%%%%%%%%%%%%%%%%%%%%%%%%%%%%%%%%%%%%%%%%%%%%%%%%%%%%%%%%%%%%%

\section{Introduction}

% Motivation
Reconfigurable robots are a family of robots that are capable of adjusting their shapes and functions to changing environments and tasks \cite{yim2007modular,ahmadzadeh2016modular}. They are posed to meet the increasing demands of providing personal robots to adjust to individual needs and physical characteristics \cite{harada2009wireless,satici2009design} as well as industrial robots to adapt to changes in the market \cite{farid2014axiomatic}. Over the past three decades, the field of reconfigurable robots has advanced from proofs-of-concept to physical implementations. However, even with their potential versatility and robustness over conventional robots, reconfigurable robots still suffers from inferior performance, one of the main factors impeding them from practical adoption. Furthermore, existing reconfigurable robots are rarely capable of functional adaption. In this paper, we propose a formal modeling framework of reconfigurable robots that are capable of both structural and functional reconfigurations. We will also explore a design philosophy called ``correct-by-construction'' to guarantee the performance of the robots during the design phase.  

% Reconfigurable Robots
Formally the approaches of studying reconfigurable robots can be roughly divided into three categories, those based on graph theory, those based on optimization, and those based on dynamic analysis. Graph-theory-based approaches are mostly suitable to study how modules are put together structurally \cite{yim2007modular,eckenstein2015modular,neubert2016soldercubes,sung2015reconfiguration}. Modules are represented as vertices while connections between the modules are represented as edges. Then tools from graph theory can be used to solve problems related to reconfigurable robots, such as configuration recognition \cite{park2008automatic} and motion planning \cite{sung2015reconfiguration}. Optimization-based approaches cast the design of a reconfigurable robot as an optimization problem with a objective function over the vector of design variables \cite{freitas2010kinematic,ferguson2007flexible}. The design variables, either discrete or continuous, are subjected to equality and/or inequality constraints. The optimization-based approaches are suitable to address trade-offs among multiple competing objectives. The detailed kinematics/dynamics of the designed robots are generally either ignored or simplified in the first two types of approaches, while the last type of approaches, dynamic-analysis-based, puts kinematics/dynamics as the main focus \cite{balmaceda2016novel,cucu2015towards}. Currently papers employing dynamic-analysis-based approaches mostly deal with arm robots \cite{djuric2010global,balmaceda2016novel} with some exceptions dealing with mobile robots \cite{cucu2015towards}. One issue with the aforementioned approaches is their inability to allow for simultaneously structural and functional reconfigurations, thus greatly restrict the potential of their robots. In this paper, we will develop a formal framework incorporating both types of reconfigurations. We will call such type of robots as self-evolvable robots. Notice that in existing literature, reconfiguration generally refers to structural changes, i.e., units/modules change the way they connect to each other mechanically. In this paper, we will adopt a rather broader definition of reconfiguration to include functional changes within each unit (we will focus on using changes in dynamics due to some physical parameters, e.g., the length of a link, as an example of functional changes in this paper). This is inspired by natural evolution, i.e., a biological mechanism (analogous to a robot) gradually changes its shapes and functions to adapt to changes in the environment (analogous to changes in missions). For the rest of the paper, we will use self-evolvable robots and reconfigurable robots interchangeably.

%% A short description of the approach
%This paper contributes to both the modeling and the design of reconfigurable robotics. \textit{Modeling}: The structural configuration and reconfiguration of a robot is modeled using the concept of Reconfiguration Graph Automaton (RGA), while the functional configuration of each module of the robot is characterized by a set of design parameters, similar to optimization-based approaches. This allows us to model the whole robot as a hybrid automaton \cite{alur2015principles,lin2014foundations}. The graph automaton model allows for formal specification, verification and synthesis of the structural configuration and reconfiguration of the robot. \textit{Design}: In this paper, we adopt a design philosophy called ``correct-by-construction'' \cite{sifakis2014rigorous,mickelin2014synthesis,nilsson2014preliminary} for the design of reconfigurable robots. The main idea of such a philosophy is to formally guarantee the robots satisfying certain requirements/specifications, generally written in formal languages, during the design phase, thus mitigating the potential cost associated with re-design as well as detecting and fixing errors as sooner as possible. In our case, with specifications written in temporal logic, the design problem is solved at two levels: a language-based synthesis method is used to find a correct structural configuration satisfying the specifications; an active learning based synthesis method is used to find an optimal design parameter. 

% Structure of the paper
This paper is organized along the line of modeling and design as follows. Section 2 discusses the modeling of self-evolvable robots. Section 3 formally defines the design problem. Section 4 presents the method to solve the design problem. Section 5 provides a case study to demonstrate our method. Section 6 concludes the paper.

\section{Modeling of Self-Evolvable Reconfigurable Robots}

In this section, we will first describe a list of modules that will be used in this paper to construct self-evolvable robots. The list is not meant to be exhaustive but mainly serves as a running example for the rest of the paper. Next, we will introduce two definitions, structural reconfiguration grammar (SRG) and structural reconfiguration automaton (SRA), which formally characterize the way the modules are mechanically/structurally connected to each other to form a robot. Then we will introduce dynamic models of modules. Finally, we will introduce two additional definitions, functional reconfiguration grammar (FRG) and functional reconfiguration automaton (FRA), which formally characterize the way to (re)configure a robot not only structurally but also functionally. 

\subsection{Modules}
\label{sec:modules}

Reconfigurable robots have the capacity to deliberately change their own structures by adaptively rearranging the connectivity of their components  according to the environments and/or task scenarios \cite{yim2007modular}. The repeatable building components of a reconfigurable robot are called modules or mechanical units. They usually have uniform docking interfaces allowing different modules to connect to each other mechanically and electronically. 

%Modular reconfigurable robots have the capacity to deliberately change their own structure by adaptively rearranging the connectivity of their components  according to the environments and/or task scenarios \cite{yim2007modular}. The repeatable building components of a reconfigurable robot are called modules or mechanical units. They usually have uniform docking interfaces allowing different modules to connect to each other mechanically and electronically. In the following text, we will describe four types of modules as shown in Fig. \ref{fig:modules_6}. Each module has an input end denoted by subscript 1 and an output end denoted by subscript 2. Information comes into the module via the input end and gets out of the module via the output end. Two coordinate frames are attached to the two ends of the module for the purpose of characterizing the dynamics of various parts of the robot. We will use superscript to differentiate different modules. To illustrate the functional reconfigurability, some module is associated with a design parameter, which can be adjusted thus changing the functionality of the module.  

\begin{figure}[thb!]
\centering
\subfigure[]{\includegraphics[width=0.6\textwidth]{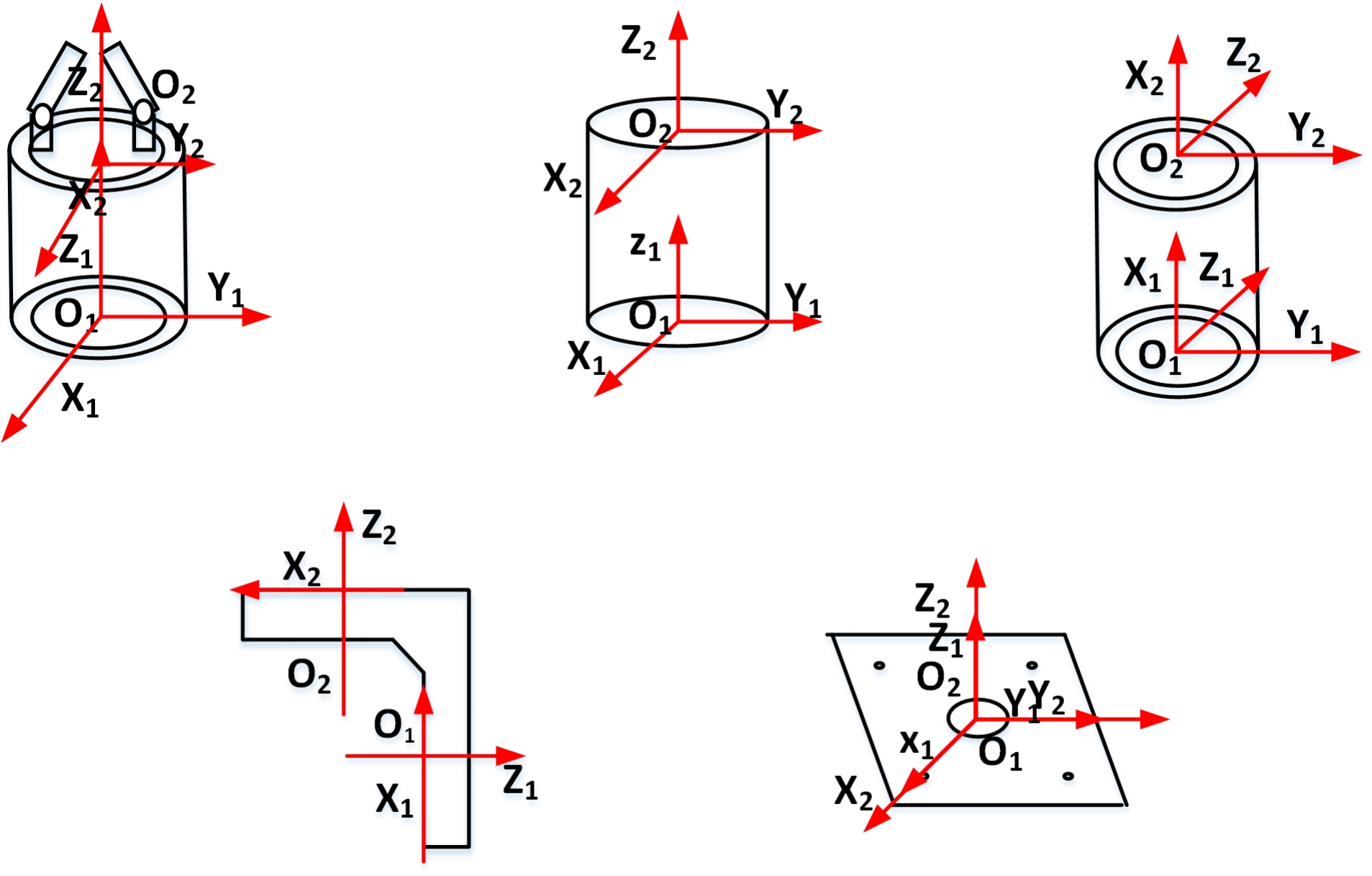} \label{fig:modules_1}}
\subfigure[]{\includegraphics[width=0.6\textwidth]{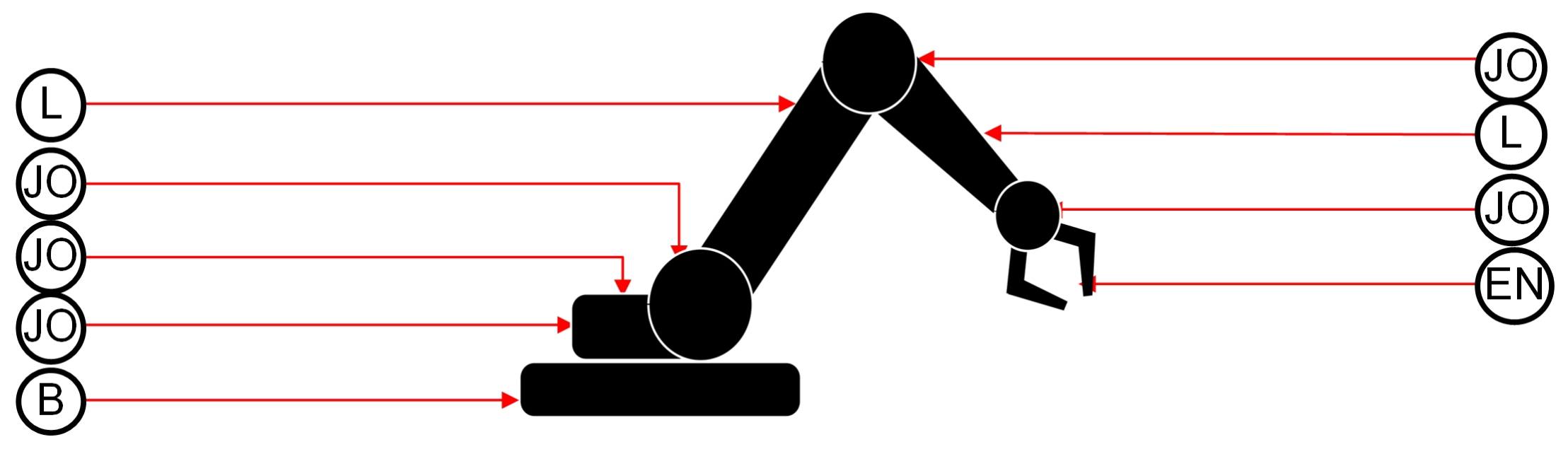} \label{fig:modules_2}}
\subfigure[]{\includegraphics[width=0.6\textwidth]{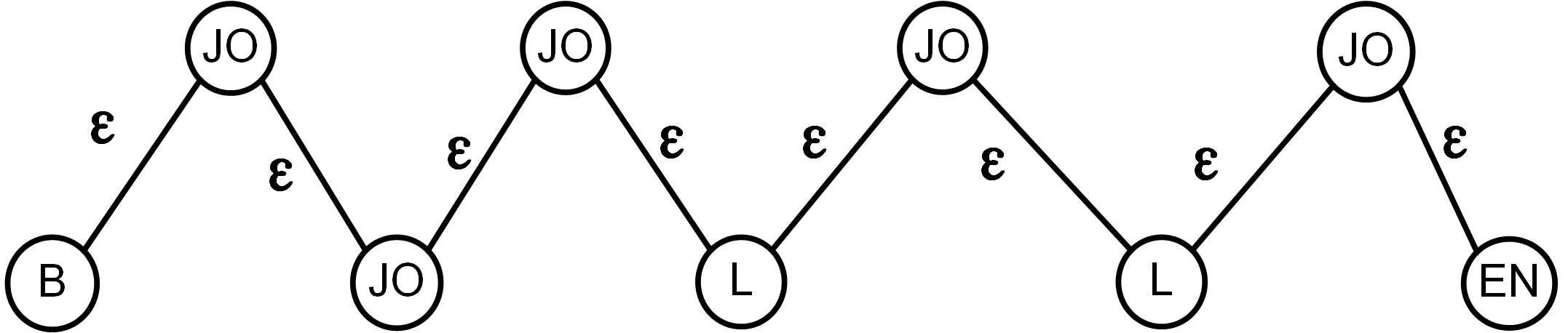} \label{fig:modules_3}}
\caption{(a) The modules that will be used in this paper to construct reconfigurable robots. They are (starting from the upper left corner in the clockwise direction) end-effector module, joint module, cylindrical link module, base module, and L-shaped link module. (b) An illustrative example of a robot built from the modules. (c) The $(\Sigma,\Gamma)$ labeled graph representation of the robot shown in (b). The sequence of symbols representing the configuration is $B \epsilon JO \epsilon JO \epsilon  JO \epsilon  L \epsilon  JO \epsilon  L \epsilon  JO \epsilon  EN$.}
\end{figure}

In the following text, we will describe four types of modules as shown in Fig. \ref{fig:modules_1}. Each module has an input end denoted by subscript 1 and an output end denoted by subscript 2. Information comes into the module via the input end and gets out of the module via the output end. Two coordinate frames are attached to the two ends of the module for the purpose of characterizing the dynamics of various parts of the robot. To illustrate the functional reconfigurability, some module is associated with a design parameter, which can be adjusted thus changing the functionality of the module. 

\textit{Joint Module:} As shown by the upper middle sub-figure of Fig. \ref{fig:modules_1}, the joint module $F$ is modeled as a cylinder with an axis of rotation $O_{1}O_{2}$. An input frame $JO_{1}$ is attached to the input connector/end at point $O_{1}$ and an output frame $JO_{2}$ is attached to the output connector/end at point $O_2$. The z-axes of the two frames both coincide with the line $O_{1}O_{2}$ while their x and y axes define the two end planes.  

\textit{Link Module:} In this paper, we define two different types of link modules, the cylindrical link module as shown at the upper right corner of Fig. \ref{fig:modules_1} and the L-shaped link module at the lower left corner of Fig. \ref{fig:modules_1}. The cylindrical link module is modeled similar to the link module. It is a cylinder with an axis of rotation $O_{1}O_{2}$. An input frame $L_1$ and an output frame $L_2$ are attached to the two ends at point $O_1$ and point $O_2$, respectively, with their Z-axes having the same direction as $O_{1}O_{2}$ and their X- and Y-axes defining the two end planes. The L-shaped link, on the other hand, has its input end and out end perpendicular to each other. An input frame $L_1$ is attached to the input end of the module with its z-axis perpendicular to the input end plane and its X- and Y-axes defining the input end plane. An output frame $L_2$ is defined similarly with respect to the output end. Each link module has a designing parameter $p_L$, which is the length of the module. 

\textit{End-Effector Module:} The end-effector module is the functional component of the robot. It has a variety of forms, i.e., a mechanical gripper and a machine tool base. In this paper, as shown at the upper left corner of Fig. \ref{fig:modules_1}, we will use a gripper as an example of the end-effector module. An input frame $EN_1$ is attached to its input end with its Z-axis perpendicular to the end plane and its X- and Y-axes defining the end plane. An output frame $EN_2$ is defined in such a way that its origin is at the grasping center of the fingers.

\textit{Base Module:} The base module serves as the base for other modules. As shown at the lower right corner of Fig. \ref{fig:modules_1}, an input frame $B_1$ is attached to the input end of the module, which is attached to the ground, while an output frame $B_2$ is attached to the output end of the module in such a way that the origin of the frame $B_2$ is located at the center of the base, its Z-axis is perpendicular to the output end plane, and its X- and Y-axes define the output end plane.

A module can be connected to another one as long as the input framework of one module coincide with the output framework of the other. Of course, in order to build a functional robot, some further requirements need to be taken into consideration, e.g., the base module must be attached to the ground and there must be at least one end-effector module. An example of such a robot built from the modules is illustrated in Fig. \ref{fig:modules_2}.

%\begin{figure}[thb!]
%\centering
%\includegraphics[width=0.45\textwidth]{FIG/modules.jpg}
%\caption{An example of a reconfigurable robot built from the modules shown in Fig. \ref{fig:modules_6}: (a) the robot and (b) its corresponding $(\Sigma,\Gamma)$ labeled graph representation. The sequence of symbols representing the configuration is $B \epsilon JO \epsilon JO \epsilon  JO \epsilon  L \epsilon  JO \epsilon  L \epsilon  JO \epsilon  EN$.}
%\label{fig:modules}
%\end{figure}

\subsection{Structural (Re)Configuration}

%In this subsection, we will first introduce the concept of a labeled graph. Then we will introduce a concept called Reconfiguration Graph Grammar (RGG), which is a formal grammar characterizing how a labeled graph representing the structural configuration of a reconfigurable robot can be constructed. Finally, an equivalent concept to RGG, Reconfiguration Graph Automaton (RGA), will be introduced. 

%\subsubsection{Preliminaries: Labeled Graph}

In this subsection, we will present two definitions, structural reconfiguration grammar (SRG) and structural reconfiguration automaton (SRA). Both of them can characterize   the way to reconfigure a robot structurally/mechanically.  

\subsubsection{Structural Reconfiguration Grammar (SRG)}

Let's first define $(\Sigma,\Gamma)$ labeled graph, modified from a definition called $\Sigma$-labeled $\Gamma$-graph in \cite{reiter2015distributed}.
\begin{dfn}
($(\Sigma,\Gamma)$ labeled graph) \cite{reiter2015distributed}: Let $\Sigma$ and $\Gamma$ be two finite nonempty sets of node labels and edge labels, respectively. Let $G=(V,E)$ be a directed graph, where $V$ is the set of nodes and $E$ is the set of directed edges. The graph $G$ can be labeled by a function $l: E \rightarrow (\Sigma, \Gamma)$ with the node labeling $l_V: V \rightarrow \Sigma$ and the edge labeling $l_E: E \rightarrow \Gamma$. The tuple  $\langle G, \Sigma, \Gamma\rangle$ is called a $(\Sigma,\Gamma)$ labeled graph (or simply labeled graph) and denoted by $G_{(\Sigma,\Gamma)}$.
\end{dfn}

Next, we will define the structural reconfiguration grammar (SRG).

\begin{dfn}
\label{dfn:RGG}
\textit{(Structural Reconfiguration Grammar, SRG)}: A reconfiguration graph grammar $SRG$ is a tuple $SRG=(\Sigma, \Gamma, N, P, I)$, where $\Sigma$ is a finite alphabet of node symbols or tokens, $\Gamma$ is a finite alphabet of edge symbols or tokens, $N$ is a finite set of symbols called non-terminals, $P$ is a finite set of mappings $N \rightarrow (\Sigma \cup \Gamma \cup N)^*$ called production rules with superscript $(\cdot)^*$ as a notation for the set of all strings over an alphabet $(\cdot)$, and $I \in \Sigma$ is the initial node symbol.
\end{dfn}

The production rules can be conveniently written in Backus-Naur form \cite{aho1979introduction}, $N\rightarrow X_1 X_2 ... X_n$, where $N$ is some non-terminal and $X_1 X_2 ... X_n$ is a sequence of node/edge symbols and non-terminals. A production rule indicates that $N$ may expand to all strings represented by the right hand side of the rule. The collection of all sequences of \textit{terminal} symbols/tokens, i.e., those in $\Sigma$ or $\Gamma$, generated by the SRG is called the language of the SRG, denoted by $L(SRG) \subset (\Sigma \cup \Gamma)^*$.

\begin{example}
\label{example:1}
\textit{(SRG for a reconfigurable, planar, multi-link manipulator robot)} Given the four types of modules described in Sec. \ref{sec:modules}, $JO$, the joint module, $L$, the link module, $EN$, the end-effector module, and $B$, the base module, a SRG for a reconfigurable, planar, multi-link robot is $SRG=(\Sigma, \Gamma, N, P, I)$ with (1) $\Sigma = \{JO, L, EN, B\}$, the collection of modules; (2) $\Gamma = \{\epsilon \}$ meaning that there is no restriction on the way one module is connected to another one; (3) $N$ is the collection of $(\Sigma,\Gamma)$ labeled graphs with each element corresponding to a structural configuration of the robot; (4) $P: N \rightarrow B | N \epsilon JO | N \epsilon L | N \epsilon EN$ characterizing how the robot is configured; and (5) $I = B$. An illustration is shown in Fig. \ref{fig:modules_3}.
\end{example}

%\begin{rmk}
%For simplicity, in the above example, there is no restriction on how one module can be connected to another one. Actually, it is quite straightforward to specify such restrictions by representing them as edges between nodes (modules) and labeling them according to the connection mechanism, which can be called as connector. For instance, if there are $n$ different ways to connect a base module with a joint module, the collection of such connections can be represented by $\Gamma_{B, JO}=\{\gamma_{B,JO}^1,...,\gamma_{B,JO}^n\}$. Furthermore, even though the robot in Fig. \ref{fig:modules} is a chain robot, the proposed RGG can also be used to denote the way to configure a lattice robot.
%\end{rmk}

The structural reconfiguration grammar (SRG) can be defined alternatively as follows:
\begin{dfn}
\label{dfn:RGG2}
\textit{(Structural Reconfiguration Grammar, SRG, Alternative Definition)}: A reconfiguration graph grammar $SRG$ is a tuple $SRG=(Z, N, P, I)$, where $Z =\{\Sigma,\Gamma\}$ is a finite alphabet of symbols or tokens, $P$ is a finite set of mappings $N \rightarrow (Z \cup N)^*$ called production rules, and the others have the same meanings as in Definition \ref{dfn:RGG}.
\end{dfn}

\subsubsection{Structural Reconfiguration Automaton (SRA)}

Context-free grammars (CFGs), such as those in Definition \ref{dfn:RGG} and Definition \ref{dfn:RGG2}, have equivalent representations as pushdown automata (PDA) which recognize the language of the grammar \cite{aho1979introduction}. A pushdown automaton is a automaton with a stack, which provides the automaton with memory. The automaton corresponding to SRG, called structural reconfiguration automata (SRA), can be defined as follows:
 
\begin{dfn}
\label{dfn:RGA}
\textit{(Structural Reconfiguration Automaton, SRA)}: A reconfiguration graph automaton $SRA$ is a tuple $SRA=(Q, Z, \delta, Q_0, A)$, where $Q$ is a finite set of states, $Z$ is a finite alphabet of symbols/tokens, $\delta: Q \times Z \rightarrow Q$ is the transition function, $Q_0$ is the initial state, $A \in Q$ is the set of accept states.
\end{dfn}

Let $SRA=(Q, Z, \delta, Q_0, A)$ be a SRA and $\omega=z_1 ... z_n \in Z^*$ a finite word. A run for $\omega$ in $SRA$ is a finite sequence of states $q_0 q_1 ... q_n$ such that: (i) $q_0 \in Q_0$; (ii) $q_i \rightarrow_{z_{i+1}} q_{i+1}$ for all $0 \leq i < n$ where $\rightarrow_{.}$ is defined by the transition $\delta$ as $q \rightarrow_{z} q'$ if and only if $q' \in \delta (q, z)$. Runs $q_0 q_1 ... q_n$ is called accepting if $q_n \in A$. A finite word $\omega \in Z^*$ is called accepted by $SRA$ is there exists an accepting run for $\omega$. The accepted language of $SRA$, denoted by $L(SRA)$, is the set of finite words in $Z^*$ accepted by $SRA$, i.e., $L(SRA)=\{\omega \in Z^* | \text{ there exists an accepting run for} \omega \text{ in } SRA\}$.

The idea behind the construction of a PDA from a CFG is to have the PDA simulate the sequence of left- or right-sentential forms that the grammar uses to generate a given terminal string $\omega$ \cite{aho1979introduction,baier2008principles,dantam2013motion}. For a reconfiguration graph grammar $SRG$, there is a unique equivalent reconfiguration graph automaton $SRA$ such that $L(SRG)=L(SRA)$. For a given SRG $SRG=(Z, N, P, I)$, its equivalent SRA $SRA=(Q, Z, \delta, Q_0, A)$ is constructed as follows: (1) $Q=(N \cup Z)^*$; (2) they share the same $Z$; (3) $q' \in \delta (q, z)$ if $q \rightarrow q'=qz$ is a production rule with $q \in N$, $z \in Z$, and $q' \in (N \cup Z)^*$; (4) $Q_0 = I$; and (5) $A=Z^*$. Even though SRA and SRG are equivalent, they can be used for different purposes. For instance, SRG, given its constructive form, is more intuitive, while SRA, given its automaton format, is easier to be integrated with other formal verification and synthesis techniques, such as model checking \cite{baier2008principles}.  

\begin{figure}[thb!]

\centering
\includegraphics[width=0.45\textwidth]{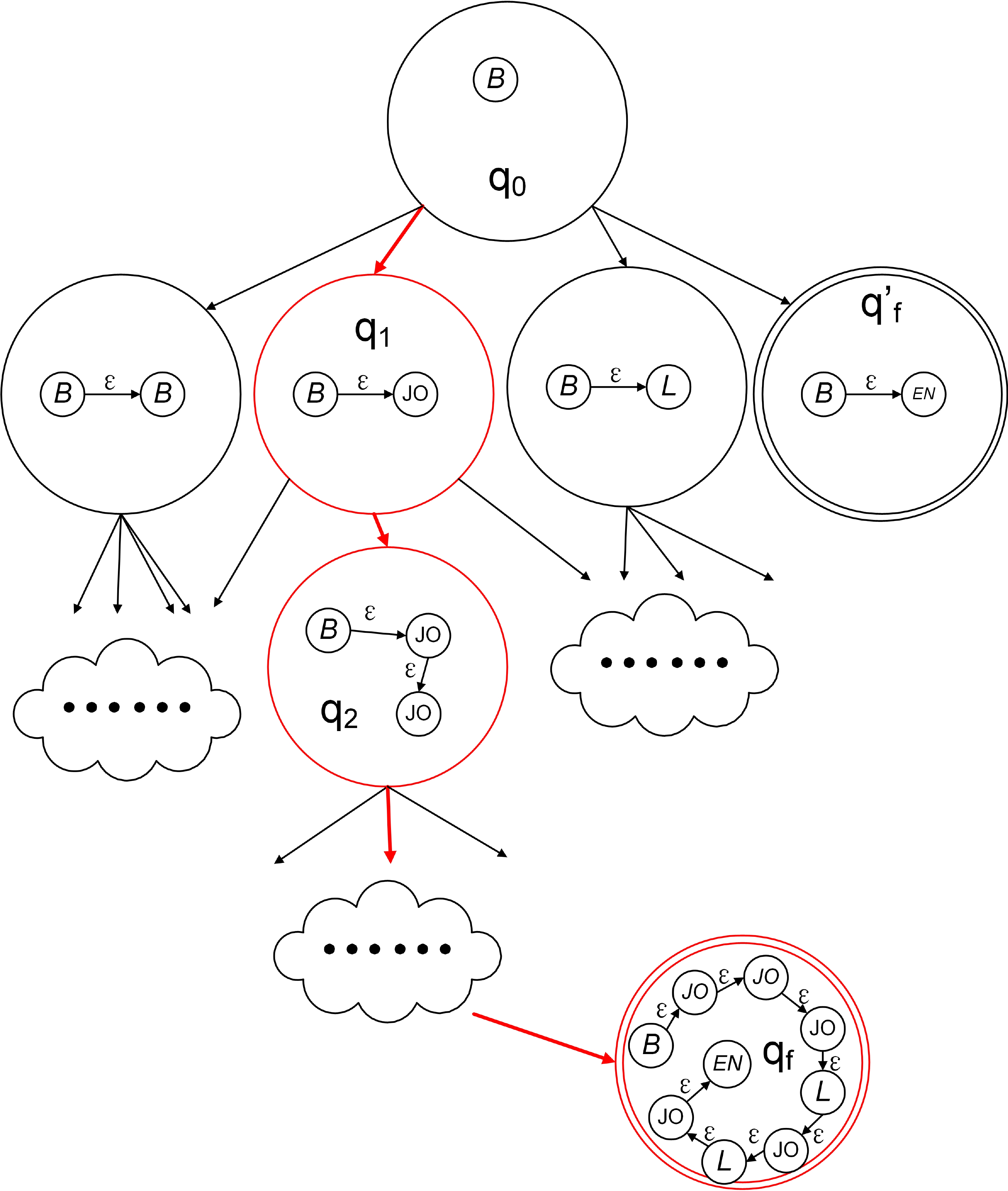}
\caption{Part of the reconfiguration graph automaton (SRA) of the reconfigurable robot illustrated in Example \ref{example:1} and Fig. \ref{fig:modules_3}. The initial state $q_0$ is indicated by having an incoming arrow without source. The accept states are indicated by double circles. The red nodes and edges indicate the accepted run to construct the robot in Fig. \ref{fig:modules_3}.}
\label{fig:RGA}
\end{figure}

\begin{example}
\textit{(SRA of the reconfigurable robot in Example \ref{example:1})} Part of the SRA of the reconfigurable robot illustrated in Example \ref{example:1} and Fig. \ref{fig:modules_1} is shown in Fig. \ref{fig:RGA}. A state of the SRA corresponds to a labeled graph representation of a structural configuration with only one initial state $q_0 = B$, i.e., the the base module. A transition between two states represents the addition or removal of a module. For instance, the transition from $q_0$ to $q_1$ represents to connection of a joint module $JO$ to the base module $B$. To specify the requirement that a functional robot must have an end-effector, we restrict the set of accept states to $A=\{\omega \in Z^*| \exists i, \text{ such that } \omega_i = EN\}$, i.e., at least one of the modules need to be an end-effector module. An example of such accept states is $q_f$ in Fig. \ref{fig:RGA}. The steps to construct a robot structurally can then be represented as an accepted run of the corresponding SRA as shown in the figure.
\label{example_2}
\end{example}

\subsection{Models of Modules}

Each module of a reconfigurable robot has a unique function, which depends on some continuous or discrete design parameters, e.g., the dimension of the module. For instance, the length of a link module of a robot determines the configuration space of the robot, i.e., whether a certain position and orientation can be achieved.  Here we introduce a definition of modules as one of the modeling bases to allow a robot to reconfigure not only its structure but only its functions, i.e., the set of design parameters.  

\begin{dfn}
\label{dfn:modules}
\textit{(Model of Modules)}: The function of a modules $\sigma \in \Sigma$ is defined as a parametric controlled dynamical system $F_{\sigma}=(X_{\sigma}, \Xi_{\sigma}, U_{\sigma}, f_{\sigma})$, where $X_{\sigma} \subset \mathbb{R}^n$ is the state space, $\Xi_{\sigma} \subset \mathbb{R}^p$ is the parameter space, $U_{\sigma} \subset \mathbb{R}^m$ is the control space, $f_{\sigma}:X_{\sigma} \times \Xi_{\sigma} \times U_{\sigma} \rightarrow X_{\sigma}$ is an analytic vector field, assumed to be sufficiently smooth, and $n$, $p$, $m$ are the dimensions of $X_{\sigma}$, $\Xi_{\sigma}$ and $U_{\sigma}$, respectively.
\end{dfn} 

Notice that the modules are controlled not autonomous, meaning that a designer or the robot itself has the freedom to specify a control policy $u_{\sigma} \in U_{\sigma}$ for a module $\sigma$.

\subsection{Functional (Re)Configuration}

%, which formally represents the way to configure and re-configure a robot's structure and function by a linguistic way. While automata describe the structure of the robot, we need more to describe the meaning or semantics of the structure. To define semantics, we extend a CFG with additional semantic rules that describe actions to take within each transition. The semantic rules can compute additional values that may be stored as attributes, which are the dynamic properties of the robot associated with each token and then reused in other semantic rules. This extension of CFG with additional semantic rules is called  . Based on the extension of CFG and RGG, we can define the functional reconfiguration grammar (FRG).

%\subsubsection{Functional Reconfiguration Grammar(FRG)}

The models of modules, combined with the concept of SRG, lead naturally to the following concept called Functional Reconfiguration Grammar (FRG):
\begin{dfn}
\label{RG}
\textit{(Functional Reconfiguration Grammar, FRG)}: A functional reconfiguration grammar $FRG$ is a tuple $FRG=(N,Z,P,F,I)$, where $N$, $Z$, $P$ and $I$ are defined the same as in Definition \ref{dfn:RGG2} and $F:=\{F_{\sigma}, \sigma \in \Sigma\}$ with each $F_{\sigma}$ defined the same as in Definition \ref{dfn:modules}.
\end{dfn}

Given a set of modules with their dynamics described by parametric dynamical systems (Definition \ref{dfn:modules}) and as a grammar, structural reconfiguration grammar (SRG) in our case, describing how these modules can be structurally connected, the above definition gives rise to a range of dynamics that can emerge from the whole robots. Such dynamics can be considered as the results of the semantic interpretation of the syntax of the functional reconfiguration grammar (FRG), i.e., given a production rule in the corresponding SRG, a semantic rule can be generated by a parser; applying a sequence of production rules in SRG gives the structural configuration of the robot, while applying the corresponding sequence of semantic rules of the corresponding FRG gives the functional configuration (the dynamics in our case) of the robot. The aforementioned points can be best understood with an example. 

\begin{example}
\label{example_3}
\textit{(FRG of the reconfigurable robot in Example \ref{example_2})} The linearized dynamic model of a reconfigurable robot constructed in Example \ref{example_2} can be described as follows: 
\begin{equation}
\label{modelofrobot}
[M]\{\ddot{x}\}+[C]\{\dot{x}\}+[K]\{x\}=\{T\}-\{F\}
\end{equation}
where 
\begin{equation*}
\begin{array}{lll}
\{x\}=\{\triangle q\}^{T};\quad
\left[ M \right]=[E_{2}]^{T}[E_{2}]\\
\left[ C\right] =[E_{2}]^{T}[E_{1}]+[\dot{E}_{2}]^{T}[E_{2}]+[E_{2}]^{T}[\dot{E}_{2}]-[E_{1}]^{T}[E_{2}]\\
\left[ K\right] =[\dot{E}_{2}]^{T}[E_{1}]+[E_{2}]^{T}[\dot{E}_{1}]-[E_{1}]^{T}[E_{1}]\\
\{T\}=\{\triangle \tau\}^{T}\\
\{F\}= [\dot{E}_{2}]^{T}[E_{0}]+[E_{2}]^{T}[\dot{E}_{0}]-[E_{1}]^{T}[E_{0}]\\
\end{array}
\end{equation*}
with $\triangle q$ as the vector of perturbed link poses and $\triangle \tau$ as the vector of perturbed torques \cite{chen2001dynamic}. 

Here a set of semantic rules can be introduced to construct the matrix $E$ for the set of production rules in structural reconfiguration grammar (SRG). Specifically, if the production rule is $N\rightarrow N\varepsilon JO$, i.e., the newly added module is a joint, then $E$ will be kept the same; if the production rule is $N\rightarrow N\varepsilon L$, i.e., the newly added module is a link; let's call the new link as the $n$-th link and index existing ones as 1-st link, 2-nd link and so on, according to the order they are added, then $E$ will be updated as follows:
\begin{equation*}
\label{dynamicchange}
\begin{array}{lll}
\left[ E_{i0} \right] =
\left\lbrace 
\begin{matrix}
\sum_{l=1}^{i}-L_{l}\dot{q}_{0l}\sin q_{0l}\\
\sum_{l=1}^{i}L_{l}\dot{q}_{0l}\cos q_{0l}\\
\end{matrix}
\right\rbrace \\

\left[ E_{i1} \right] =

\begin{bmatrix}
-L_{1}\dot{q}_{01}\cos q_{01}&-L_{2}\dot{q}_{02}\cos q_{02}&\cdots&-L_{i}\dot{q}_{0i}\cos q_{0i}\\
-L_{1}\dot{q}_{01}\sin q_{01}&-L_{2}\dot{q}_{02}\sin q_{02}&\cdots&-L_{i}\dot{q}_{0i}\sin q_{0i}\\
\end{bmatrix}
 \\

\left[ E_{i2} \right] =
\begin{bmatrix}
-L_{1}\sin q_{01}&L_{2}\cos q_{02}&\cdots&-L_{i}\sin q_{0i}\\
L_{1}\cos q_{01}&L_{2}\cos q_{02}&\cdots&L_{i}\cos q_{0i}\\
\end{bmatrix}
 \\
\\
\left( \left[ E_{k} \right]^{T}\left[ E_{l} \right]\right)_{n}=\left( \left[ E_{k} \right]^{T}\left[ E_{l} \right]\right)_{n-1}+L_{n}[E_{nk}]^{T}[E_{nl}]
\end{array}
\end{equation*}
where $L_i$ is the length of the i-th link with $i=1,...,n$ and $k,l=0,1,2$, $k\geq l$.
\end{example}

An automaton, called functional reconfiguration automaton (FRA), that is equivalent to a functional reconfiguration grammar (FRG), can be constructed similar to the way that a structural reconfiguration automaton (SRA) is constructed from a structural reconfiguration grammar (SRG). We are going to omit the definition here to save space. 

%\begin{figure}[hbtp]
%\label{FRA system}
%\centering
%\includegraphics[scale=0.35]{FIG/FRA.jpg}
%\caption{FRA transition between  the state with one link to the state of two links with associated dynamic change.}
%\label{Planarmanipulator}
%\end{figure}
%
%\begin{rmk}
%From the comparison of the two model methods, we can see that the FRG based method can model the manipulator with a formal fashion, which is simple and efficient, especially when the number of link is large. 
%\end{rmk}

\section{Design Problem Statement}

%The core value of reconfigurable robot is that it can change its structure to adapt to the  environment. Thus the goal of self-evolving is to reach the desired state under the constraints of new environment, and the constraints can be seen as obstacles in the workspace.In other words, given a configuration, which described by a word in the language of RGA,thus its robot dynamics, described by  FRG, an initial state, a description of the workspace, and a target region, the configuration should make sure  there exist a motion plan which is collision-free and dynamically-feasible, to steer the robot from its initial point to the target region. The workspace of the robot can be simplified as a set of polytopes, and these polytopes can be classified as obstacles region,region of interest(target region) and the free space region, respectively.We can denote the map of a state $x$ of the robot to a region $\kappa $ in free space region $\Pi$ with function  $\mathcal{H}_{\mathcal{X}\rightarrow \Pi}(x)=\kappa$, and to a reign $\kappa$ in free space $\mathcal{W}$ as  $\mathcal{H}_{\mathcal{X}\rightarrow \mathcal{W}}(x)=\kappa^{\mathcal{W}}$.

The robot's workspace can be represented by a set of polytopes $P=\{P_i, i=1,...,p\}$. Each polytope $P_i$ is assigned with an atomic proposition $\pi_i \in \Pi=\{\pi_t,\pi_o,\pi_f\}$, where $\pi_t$, $\pi_o$ and $\pi_f$ stand for ``target region'', ``obstacle region'' and ``free region'', respectively. The adjacency relationship among the polytopes can be encoded by an adjacency matrix $N=[N_{i,j},i,j=1,...,p]$ with $N_{i,j}$ as one if polytope $i$ and polytope $j$ are neighboring regions, zero otherwise. Finally, there is a projection function $\mathcal{H}: X \rightarrow \Pi$ which maps a robot's state to its corresponding atomic proposition.

\begin{prob}
\label{problem_1}
Given a functional reconfiguration grammar $FRG=(N,Z,P,F,I)$ and a workspace description $\mathcal{W}=(P,\mathcal{H},N)$, find a finite sequence of symbols $\omega=z_1...z_n \in Z^*$ and a finite sequence of parameters $\theta_{\omega}=\theta_{z_1}...\theta_{z_n}$ with $\theta{z_i} \in \Theta_{z_i}$ such that: (i) $\omega$ is accepted by the corresponding $SRG$; (ii) there exists a trajectory $x_0,...,x_k$ of the robot built in accordance with $\omega$ and $\theta_{\omega}$, satisfying the following formula $\phi$:
\begin{equation}
\label{specification}
\mathcal{H}(x_k)=\pi_t \wedge_{i=0}^{k-1} \mathcal{H}(x_i) = \pi_f  \wedge_{i=0}^{k-1} N(\mathcal{H}(x_i),\mathcal{H}(x_{i+1}))=1.  
\end{equation}  
\end{prob}

\begin{rmk}
The above formula essentially specifies a motion planning (pick-and-place) problem, i.e., the constructed robot should be able to move from a starting region to the target region while in the meantime avoiding all obstacles. Such a way of specifying the problem may seem awkward. There are two reasons for such a choice: one is to enable us to use off-the-shelf solvers to find a feasible path, and the other one is to keep the option open for future extensions. For instance, we are interested in using richer logic specifications, such as linear temporal logic \cite{baier2008principles} and signal temporal logic \cite{kong2016temporal,aksaray2016q}, in the future.   
\end{rmk}

\begin{rmk}
Solving the problem requires solving the following two sub-problems. The first one is a structural synthesis problem. We need find an $\omega$ such that it is accepted by the corresponding $SRG$, meaning that we need to build a robot that is structurally feasible, e.g., it must start from a base and end up with an effector. The solution of this sub-problem is a robot with its structural configuration fixed, i.e., the set of selected modules and the way they are connected to each other are determined. The second sub-problem is a functional synthesis problems, involving selecting a parameter $\theta_{z_i}$ for each module $F_{z_i}$ in such a way that a feasible trajectory can be generated by the constructed robot. Notice that since each module is modeled as a parametric controlled dynamical system, even after the parameters have been chosen for all the modules, we still need to to check whether there exists a control policy to solve the problem. The first sub-problem is an easy one, given the formulation of the definition of SRG or SRA. So next we are going to focus on solving the second sub-problem.   
\end{rmk}

%The above problem is a combination problem, which can be divided into three subproblem as follows
%\begin{prob}
%\textbf{Structure Synthesis:}
%Given a finite set of modular $\Sigma=\{m_1,\cdots,m_n\}$, and the Reconfiguration Graph automaton system $RGA=(Q,Z,\delta,Q_{0},A)$, find a configuration $G_{w}\in L(RGA)$ and its corresponding dynamics $(A_{G_{w}},B_{G_{w}})$ with $FRG$.
%\end{prob}
%\begin{prob}
%\textbf{Control Synthesis:}
%Given a workspace defined by $\mathcal{W}$, the set of atomic propositions corresponding to the target and free space $\Pi=\{\pi_{f},\pi_{g}\}$, a structure $G_{w}$ and robot dynamic $(A_{G_{w}},B_{G_{w}})$, find a pair of feasible finite sequence $Tr=(x,\kappa)$, where $x=x_{0}x_{1},\cdots,x_{L+1}$, with $x_{i}\in \mathcal{X}$, is a sequence of robot's state, and $\kappa=\kappa_{1}\kappa_{2},\cdots,\kappa_{L+1}$, with $\kappa_{i}\in \Pi$, is a sequence of propositions that indicates a sequence of regions in workspace. Due to the one-to-one map between the workspace region and atomic proposition, we have $\mathcal{H}_{\mathcal{X}\rightarrow\Pi}(x_{i})=\kappa_{i}$ for any $0 \leq i\leq L+1$, which can be denoted as $\kappa^{\mathcal{W}}$. With a feasible sequence, we will get a control sequence $u_{k}=u_{k}^{0}u_{k}^{1},\cdots,u_{k}^{L}$.
%\end{prob}

\section{Functional Synthesis}

Before embarking upon presenting the solution to the functional synthesis problem, let's first introduce a concept called configuration robustness.   

%To reduce the complexity of the control task, here we focus on the purposeful introduction of state and/or control quantification in the design of control system, which is a promising direction of research in control theory\cite{frazzoli2005maneuver}. In this paper, we try to develop languages and reactive behaviors in robotics, which will be a kind of Motion Description Languages in \cite{marigo1998steering,murray1994mathematical}
%\begin{figure}[hbtp]
%\label{workspace}
%\centering
%\includegraphics[scale=0.5]{FIG/track.eps}
%\caption{Pictorical representation of workspace and trajectories, obstacles(red region), and target region (green region). The blue line is the planed trajectory and red line is the  actual trajectory.}
%\end{figure}

\subsection{Configuration Robustness}

Once the structural (encoded by $\omega$, see Problem \ref{problem_1}) and functional (encoded by $\theta_{\omega}$, see Problem \ref{problem_1}) configuration of a robot has been determined, the dynamics of the robot will be determined as well, as demonstrated by the Example \ref{example_3}. The equation describing such dynamics, e.g., Eqn. (\ref{modelofrobot}), can be written in its state space form as follows:
\begin{equation}
\label{state_space}
x_{i+1} = A(x_{i}) x_{i} + B(x_{i}) u_i.
\end{equation}

\begin{dfn}
\textit{(Configuration Robustness)}: Given a configuration $(\omega, \theta_{\omega})$, a workspace description $\mathcal{W}=(P,\mathcal{H},N)$, a finite trajectory of the corresponding robot $\bar{x}=x_0,...,x_k$, and a formula $\phi$, e.g., Eqn. (\ref{specification}), the configuration robustness $\rho$ is defined as follows:  
\begin{equation}
\label{dfn:robustness}
\rho(\omega, \theta_{\omega}, \mathcal{W}, \bar{x}, \phi)=\max_{
\begin{array}{ll}
u_0,\cdots,u_{k-1}\in \mathbb{R}^m\\
v_0,\cdots,v_{k-1}\in \mathbb{R}^m
\end{array}}(-\max_{
\begin{array}{ll}
s_{0}^{u},\cdots,s_{k-1}^{u}\in \mathbb{R}\\
s_{0}^{v},\cdots,s_{k-1}^{v}\in \mathbb{R}
\end{array}}(s_{0}^{u}+s_{0}^{v},s_{1}^{u}+s_{1}^{v},\cdots,s_{k-1}^{u}+s_{k-1}^{v}))
\end{equation}
subject to
\begin{equation*}
\begin{array}{lll}
(C.1) \quad \mathcal{H}(\bar{x}) \models \phi;\\
(C.2) \quad x_{i+1}=A(x_{i}) x_{i}+B(x_{i}) u_{i}+B^{'}v_{i},\quad i=0,\cdots,k-1;\\
(C.3) \quad\parallel u_{i}\parallel \leq \overline{u}+s_{i}^{u},\quad i=0,\cdots,k-1;\\
(C.4) \quad\parallel v_{i}\parallel \leq s_{i}^{v},\quad i=0,\cdots,k-1;\\
(C.5) \quad 0 \leq s_{i}^{v},\quad i=0,\cdots,k-1;\\
(C.6) \quad \epsilon \left( \sum\limits_{l=0}^{i-1}s_{l}^{u}+s_{l}^{v} \right)\leq s_{i}^{u}+s_{i}^{v},\quad i=1,\cdots,k-1.\\
\end{array}
\end{equation*}
$\textit{C.1}$ says that the trajectory $\bar{x}$ must satisfy the specification $\phi$. $\textit{C.2}$ says that $\bar{x}$ is a feasible trajectory of the robot. $B^{'}$ is a matrix to make $[B(x_{i}),B^{'}]$ surjective. $v_i$ is an additional control input. $\textit{C.3}$, $\textit{C.4}$, and $\textit{C.5}$ constrain the input $u_i$ and the additional input $v_i$ by slack variables $s^u$ and $s^v$. These slack variables are added to relax the dynamics constraints. $\overline{u}$ is a bound on the magnitude of the control input. Finally, $\textit{CR.6}$ provides a user specified bound $\epsilon$ on the slack variables.
\label{configurationrobustness}
\end{dfn}

\begin{theorem}
\label{node}
Given two structural configurations $\omega_1$ and $\omega_2$ with $\omega_1=\omega_2 z$, i.e., $\omega_2$ is a prefix of $\omega_1$, then the following relationship holds:
\begin{equation*}
\rho(\omega_1, \theta_{\omega_1}^*, \mathcal{W}, \bar{x}, \phi) \geq \rho(\omega_2, \theta_{\omega_2}^*, \mathcal{W}, \bar{x}, \phi) 
\end{equation*}
where $\theta_{\omega_1}^*$ and $\theta_{\omega_2}^*$ are the optimal parameters for the two structural configurations $\omega_1$ and $\omega_2$, respectively, in term of configuration robustness.
\end{theorem}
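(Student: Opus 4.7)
The plan is a monotonicity-by-embedding argument: any feasible plan for the shorter configuration $\omega_2$ can be lifted to a feasible plan for $\omega_1$ that uses exactly the same slack budget, so the optimum for $\omega_1$ cannot be worse. Concretely, I would start from the optimizer of the right-hand side in Definition \ref{configurationrobustness} for $(\omega_2, \theta_{\omega_2}^*)$, producing controls $u_i^{\star}, v_i^{\star}$ and slacks $s_i^{u\star}, s_i^{v\star}$ that realize the trajectory $\bar{x}$ and achieve the robustness value $\rho(\omega_2, \theta_{\omega_2}^*, \mathcal{W}, \bar{x}, \phi)$.

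The first step is to exhibit a specific parameter choice $\tilde{\theta}_{\omega_1}$ for the longer configuration that makes the newly appended module $z$ structurally present but dynamically inert on $\bar{x}$. For the manipulator example of Example \ref{example_3}, this corresponds to picking the design parameter of the new module so that it contributes trivially to the $[E_k]^T[E_l]$ updates, e.g., a locked joint or a link whose length parameter is set so that its rows in the Jacobian-like matrices vanish (or are identically appended as zeros). In general, because Definition \ref{dfn:modules} allows each module to be treated as a parametric controlled system, the key sub-lemma I would establish is: there exists $\tilde{\theta}_z \in \Xi_z$ and a control signal on $U_z$ such that the composite dynamics $(A,B)$ for $\omega_1$ decomposes blockwise so that the $\omega_2$-coordinates evolve exactly as under $(\omega_2, \theta_{\omega_2}^*)$.

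Once that embedding is in place, I would define a candidate control policy for $\omega_1$ by appending a zero (or neutral) control for module $z$ to $u_i^{\star}$ and similarly extending $v_i^{\star}$ with zeros, and reuse $s_i^{u\star}, s_i^{v\star}$ unchanged. Verification of constraints \textit{C.1}--\textit{C.6} is then routine: \textit{C.1} holds because the $\omega_2$-projection of the lifted trajectory is exactly $\bar{x}$ and the workspace predicate $\mathcal{H}$ depends only on the end-effector position, which is preserved by the inert embedding; \textit{C.2} holds by construction; \textit{C.3}--\textit{C.5} hold because norms of the extended controls equal those of the originals; \textit{C.6} holds because the slack sequence is unchanged. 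This gives a feasible point for the $\omega_1$ optimization whose objective value equals $\rho(\omega_2, \theta_{\omega_2}^*, \mathcal{W}, \bar{x}, \phi)$, so $\rho(\omega_1, \tilde{\theta}_{\omega_1}, \mathcal{W}, \bar{x}, \phi)$ is at least that value, and taking the maximum over $\theta$ to get $\theta_{\omega_1}^*$ only increases it.

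The main obstacle is the first step: making the notion of an ``inert'' added module fully rigorous when state and control dimensions change between the two robots. This requires either (i) restricting the class of modules so that each admits a parameter and a control input rendering it kinematically invisible (which is true for the joint, link, and end-effector modules here but is not formalized in Definition \ref{dfn:modules}), or (ii) interpreting $\bar{x}$ as the workspace-level trajectory via $\mathcal{H}$ and showing that the extended robot can track the same $\mathcal{H}$-image. I would favor the first route, adding a mild well-posedness assumption on the module library that every module has a parameter-control pair making it a pass-through element, and then the rest of the argument reduces to the straightforward lifting and slack-preservation outlined above.
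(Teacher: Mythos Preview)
Your approach is essentially the same as the paper's: both argue by embedding $(\omega_2,\theta_{\omega_2}^*)$ into the $\omega_1$ problem via a parameter choice that renders the appended module $z$ inert (the paper simply writes this as zero-padding, $\theta_{\omega_1}=[\theta_{\omega_2}^*,0]$), so that the robustness value is preserved, and then invoke optimality of $\theta_{\omega_1}^*$ to conclude. Your version is more careful in verifying constraints \textit{C.1}--\textit{C.6} and in flagging the implicit assumption that such an inert parameter exists, which the paper's three-line proof takes for granted.
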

\begin{proof}
The proof of this theorem can be found in the Appendix.
\end{proof}

\begin{theorem}
\label{robustness}
Given a functional reconfiguration grammar $FRG=(N,Z,P,F,I)$ and a workspace description $\mathcal{W}=(P,\mathcal{H},N)$, there is a solution to Problem (\ref{problem_1}) if and only if there exists an $\omega$, a $\theta_{\omega}$, and a trajectory $\bar{x}$ generated by the robot built in accordance with $\omega$ and $\theta_{\omega}$, such that  
\begin{equation*}
\rho(\omega, \theta_{\omega}, \mathcal{W}, \bar{x}, \phi) \geq 0
\end{equation*}
\end{theorem}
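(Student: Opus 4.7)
The plan is to prove the biconditional by exhibiting in each direction a witness matching the other side. The forward implication is a direct substitution into the optimization defining $\rho$, while the reverse implication requires extracting an admissible control and a dynamically valid trajectory from a nonnegative value of that optimization. I take the acceptance of $\omega$ by the SRG as inherited from the hypothesis that $\omega$ encodes a valid configuration of the robot, since the theorem focuses on the analytic/functional part of Problem \ref{problem_1}.

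For the forward direction, assume Problem \ref{problem_1} is solved by $\omega, \theta_\omega$ together with admissible controls $u_i^*$ (so $\|u_i^*\| \leq \overline{u}$) that produce a trajectory $\bar x = x_0,\ldots,x_k$ satisfying $\phi$. Inside Definition \ref{configurationrobustness} I set $u_i = u_i^*$, $v_i = 0$, and $s_i^u = s_i^v = 0$. Then C.1 is the assumed specification satisfaction, C.2 reduces to the true dynamics because $v_i = 0$, C.3 is admissibility, and C.4--C.6 hold trivially. The objective evaluated at this point is $-\max_i(s_i^u + s_i^v) = 0$, so the optimum $\rho$ is at least zero. Conversely, if $\rho \geq 0$ for some $\omega, \theta_\omega$ and some $\bar x$ generated by the corresponding robot, the optimization admits a feasible tuple with $s_i^u + s_i^v \leq 0$ for all $i$; combining this with C.5 yields $s_i^u \leq -s_i^v \leq 0$, so C.3 tightens to $\|u_i\| \leq \overline{u}$. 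Because $\bar x$ is a genuine robot trajectory, there exist $\tilde u_i$ with $x_{i+1} = A(x_i) x_i + B(x_i)\tilde u_i$; equating this with C.2 gives $B' v_i = B(x_i)(\tilde u_i - u_i)$, and the defining role of $B'$ (supplying directions so that $[B(x_i),B']$ is surjective without duplicating the range of $B(x_i)$) forces $v_i = 0$ and $u_i = \tilde u_i$. The resulting $\bar x$ is then a trajectory of the true robot, driven by admissible controls, and satisfying $\phi$ via C.1, which is exactly a solution to Problem \ref{problem_1}.

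The main obstacle I expect is the last algebraic step in the reverse direction, namely arguing that $v_i = 0$ at any feasible point with $\rho \geq 0$. The definition only requires $[B(x_i),B']$ to be surjective, not that the range of $B'$ be complementary to that of $B(x_i)$, so in principle several distinct $(u_i, v_i)$ pairs can reproduce the same right-hand side of C.2; the cleanest fix is either to strengthen the hypothesis so that $B'$ is a direct complement of $\operatorname{range}\,B(x_i)$, or to replace a given feasible point by one in which the portion of $v_i$ absorbable by $B(x_i)$ is reassigned to $u_i$ and the slack variables are re-minimized accordingly. Either route reduces the perturbed problem to the true dynamics and converts configuration-robustness nonnegativity into a bona fide solution of Problem \ref{problem_1}.
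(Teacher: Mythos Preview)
Your forward direction coincides with the paper's Necessity argument: both substitute the genuine controls together with $v_i=0$ and zero slacks into the robustness definition to exhibit $\rho\ge 0$.

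In the reverse direction the two routes diverge. You argue directly: from $s_i^u+s_i^v\le 0$ and C.5 you obtain $\|u_i\|\le\overline{u}$, and then you try to force $v_i=0$ by matching C.2 against the true dynamics producing $\bar x$. As you yourself note, this last step does not follow from mere surjectivity of $[B(x_i),B']$, and neither of your proposed repairs closes the gap cleanly: strengthening the hypothesis on $B'$ changes the theorem, while reassigning the $B(x_i)$-component of $B'v_i$ to $u_i$ can push $\|u_i\|$ past $\overline{u}$, so the ``re-minimize the slacks'' step you gesture at still requires a genuine argument.

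The paper avoids this obstacle by a detour. It introduces two auxiliary optimization problems (a \texttt{ZEROPREFIX} maximization and a total-slack minimization $\sum_i g_i^u+g_i^v$), invokes an equivalence from \cite{shoukryscalable}, and proves an intermediate lemma: $\rho\ge 0$ implies the associated controls are optimal for the slack-minimization problem with $g_i^u+g_i^v=0$ for every $i$. The point of the detour is that in this auxiliary problem \emph{both} slacks are constrained to be nonnegative (unlike $s_i^u$ in the robustness definition), so $g_i^u+g_i^v=0$ forces $g_i^v=0$ and hence $\|v_i\|\le g_i^v=0$ directly---no range-complement assumption on $B'$ is needed. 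That mechanism, passing to a problem in which the vanishing of the sum kills each term separately, is precisely the missing idea in your direct argument.
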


\begin{proof}
The proof of this theorem can be found in the Appendix.
\end{proof}

\begin{algorithm} \label{CorrectbyConstruction}
 \caption{Correct-by-Construction for Self-Evolvable Reconfigurable Robots}
 \KwIn{\\ Workspace description $\mathcal{W}=(P,\mathcal{H},N)$, a functional reconfiguration grammar $FRG$, an initial configuration $q_0$ of the FRG}
\begin{algorithmic}[1]
\STATE $\omega = q_0$
  \STATE $\mathcal{W}^*=WS.Abstraction(\mathcal{W})$;
  \STATE \textbf{while} No feasible trajectory has been found \textbf{do}
  \STATE \quad $\kappa=SAT(P^*,\mathcal{H}^*,N^*)$ ($P.Planning$);
  \STATE \quad  $\theta_{\omega}^* = argmax [\rho(\omega, \theta_{\omega}, \mathcal{W}^*, \bar{x}, \kappa)]$ ($P.Synthesis$);
  \STATE \qquad  \textbf{if} $\rho(\omega, \theta_{\omega}^*, \mathcal{W}, \bar{x}, \kappa)<0$ \textbf{then}
  \STATE \qquad    $\phi_{c}=CounterExample$;
  \STATE \qquad    $\kappa=\kappa\wedge\phi_{c}$;
   \STATE \qquad   \quad \textbf{if} $\kappa:=\emptyset$ \textbf{then}
   \STATE \qquad\qquad   $\omega = S.Synthesis(\omega)$;
  \STATE \qquad  \textbf{else}
   \STATE \qquad   \qquad \textbf{break};
  \STATE \textbf{return} Configuration $\omega, \theta_{\omega}$.
\end{algorithmic}
\end{algorithm}

%\begin{prob}
%\textbf{Parameters Optimization:}
%Given a configuration $G_{w}$, a workspace $\mathcal{W}$ , a set of proposition $\Pi$ and a finite sequence $\kappa$, find a parameter vector $\xi_{G_{w}}^{*}$, such that the following equation hold
%\begin{equation}
%\rho(G_{w}(\xi_{G_{w}}^{*}),\mathcal{W},\Pi,\kappa)=\max\limits_{\xi_{G_{w}}\in\Xi_{G_{w}}} \rho(G_{w}(\xi_{G_{w}}),\mathcal{W},\Pi,\kappa)
%\end{equation}
%\end{prob}

%The goal of this section is to describe how to solve the problems mentioned above. The problem of synthesizing a robot structure for the reconfigurable robot is traditionally solved with motion planning based methods, which usually lead to state explosion as the number of modular increases. In this paper, we propose a language guide construction method to synthesis the structure of the reconfigurable robot and largely reduce the dimension of search space. 

Our proposed algorithm to solve Problem \ref{problem_1} is briefly outlined in Algorithm 1. It involves solving three main sub-problems. The first problem is a path planning problem ($P.Planning$): given certain abstraction of the workspace, it finds a path $\kappa=(\kappa_0,...,\kappa_k)$ satisfying
\begin{equation}
\label{path_specification}
\kappa_0 = \bar{\kappa} \wedge \kappa_k = \pi_t \wedge_{i=1}^{k-1} \kappa_i = \pi_f
\end{equation}
where $\bar{\kappa}$ is the starting region, $\pi_t$ is the target region, and $\pi_f$ is a free region. Essentially the problem entails finding a path from the starting region to the target region while avoiding all obstacles. The second problem is a parameter synthesis problem ($P.Synthesis$): given a path $\kappa$ and the current structural configuration $\omega$ of the robot, it finds an optimal parameter $\theta_{\omega}$ to optimize the configuration robustness (refer to Theorem \ref{robustness} for the rationale). Finally, we need to solve a structural synthesis problem ($S.Synthesis$), i.e., to find the next structural configuration to be considered. The last problem is an easy one as mentioned. So we are going to focus on solving the other two problems.
 
\subsection{Path Planning}

Before solving the path planning problem, we first abstract the description of the workspace $\mathcal{W}=(P,\mathcal{H},N)$ further by using the technique described in \cite{ding2011automatic}. The corresponding function in Algorithm (1) is $WS.Abstraction$. The end result is a coarse abstraction of the workspace. Each region $\mathcal{W}_i$ is a polytope, mathematically specified by a set of linear constraints $C_{\mathcal{W}_{i}}x \leq b_{\mathcal{W}_{i}}$. Notice that such an abstraction is a refinement of the original description of the workspace. Thus the original proposition of a region should be inherited, i.e., if $\mathcal{W}_{j}$ is a refinement of $P_i$, then $\pi_{\mathcal{W}_{j}} = \pi_{P_i}$. 

Notice that propositions attached to the regions are atomic. Furthermore, the path planning specification, Eqn. (\ref{path_specification}), is written in propositional logic. Thus given the abstraction of the workspace, $\mathcal{W}^*$, off-the-shelf SAT solvers can be used to efficiently solve the path planning problem \cite{shoukryscalable}. The corresponding function in Algorithm 1 is $SAT$. In the future, we are planning to replace the specification with richer ones, such as those written in linear temporal logic \cite{baier2008principles}. In that case, SMT solvers are needed \cite{komuravelli2014smt}. 

\subsection{Parameter Synthesis}

According to Theorem 2, the parameter synthesis problem ($P.Synthesis$ in Algorithm 1) entails to an optimization problem, i.e., given the current structural configuration $\omega$ of the robot, a workspace description $\mathcal{W}=(P,\mathcal{H},N)$ (after the abstraction), and a path $\kappa=(\kappa_0,...,\kappa_k)$ computed by the path planning algorithm ($P.Planning$), find a parameter $\theta_{\omega}$ as well as its corresponding control policy $u_0^{k-1}$ (subjected to the constraint that $|u_i|\leq \bar{u}, i=0,...,k-1$) such that the configuration robustness $\rho(\omega, \theta_{\omega}, \mathcal{W}^*, \bar{x}, \kappa)$ is maximized. 

Notice that once a parameter has been selected, the configuration and subsequently the dynamics of the robot will be determined. Provided with different parameters, the corresponding control policies, if they exist, will be different. Thus the parameter synthesis requires solving two problems iteratively: 
\begin{itemize}
\item (i) Given a parameter $\theta$, a workspace description $\mathcal{W}=(P,\mathcal{H},N)$, and a path $\kappa=(\kappa_0,...,\kappa_k)$, find whether there exists a control policy $u_0^{k-1}$ for the robot to track the path without colliding with the obstacles. 
\item (ii) Find the next parameter to optimize $\rho(\omega, \theta_{\omega}, \mathcal{W}^*, \bar{x}, \kappa)$. 
\end{itemize}

For the first problem, since, in this case, the structural configuration $\omega$ is fixed and the functional configuration (described by the parameter $\theta_{\omega}$) is fixed as well. According to the semantics of functional reconfiguration grammar (FRG), the two configurations together will give rise to the dynamics of the robot, which can be linearized (see Eqn. (\ref{state_space}) for a simple example). Moreover, the workspace is described by a set of linear inequalities, $C_{\mathcal{W}_{i}}x \leq b_{\mathcal{W}_{i}}$. In summary, the first problem is linear and can be efficiently solved by linear programming algorithms. 

The second problem is more challenging and interesting. There is no closed form solution for $\rho(\omega, \theta_{\omega}, \mathcal{W}^*, \bar{x}, \kappa)$ even for simple configurations. The only way any information can be obtained regarding a particular parameter $\theta_{\omega}$ is to first of all solve the control synthesis problem (the aforementioned problem (i)) and then find out its corresponding robustness. Essentially we are trying to solve a global optimization problem with an unknown objective function $\rho$. Such kind of problems can be solved by using particle swarm optimization \cite{haghighi2015spatel}, Nelder-Mead \cite{jin2013mining}, simulated annealing \cite{kong2014temporal}, and stochastic gradient descent algorithm \cite{kong2016temporal}, etc. But it is worth pointing out that many of these techniques may suffer from slow convergence.  

To facilitate the convergence rate of the optimization, we use an active learning algorithm called Gaussian Process Adaptive Confidence Bound (GP-ACB) developed by our group in \cite{chen2016active}:
\begin{equation}
\label{gpacb}
\theta_t=argmax_{\theta \in \Theta} m_{t-1}(\theta)+\eta_m(\theta)^{\frac{1}{2}}\beta_t^{\frac{1}{2}}\sigma_{t-1}(\theta),
\end{equation}
where $t$ is the current step; $\Theta$ is the search space; $\beta_t$ is a function of $t$ and independent of $\theta$; $m_{t-1}(.)$ and $\sigma_{t-1}(.)$ are the mean and covariance functions of a Gaussian process, which is unknown and characterize the underlying configuration robustness function $\rho$, respectively; $\theta_t$ is the instance that will be inquired at step $t$, meaning the label of $\theta_t$ will be obtained from the oracle (in our case, the first problem, i.e., the control synthesis problem, will be solved and the corresponding configuration robustness will be returned); $\eta_m(\theta)$ normalizes the mean $m_{t-1}(\theta)$ and can be written explicitly as
\begin{equation*}
\eta_m(\theta)=\frac{m_{t-1}(\theta)-\min(m_{t-1}(\theta))}{\max (m_{t-1}(\theta))-\min (m_{t-1}(\theta))}. 
\end{equation*}
In the algorithm, $\eta_m(\theta)$ acts as an adaptive factor to uncertainty (covariance) and favors exploration directions associated with increasing rewards. We have shown in \cite{chen2016active} theoretically that GP-ACB outperforms many state-of-the-art active learning algorithms with similar settings, e.g., GP-UCB. We have also shown empirically that GP-ACB outperforms many state-of-the-art sampling based optimization algorithms, e.g. Nelder-Mead, by an average of 30 to 40 percent faster.   

If the optimal configuration robustness for the current structural configuration is negative, meaning that there is no feasible solution regardless of the functional configuration and the control policy, we will relax the current configuration by adding another module (remember that we have shown in Theorem 1 that doing this will always improve the optimal configuration robustness). Moreover, the found counter-example $\phi_c$ will be added to the current path specification $\kappa$ to prone the search space for the path planning algorithm.

%\subsection{Optimization with Finite Input Space}
%Sampling-based technique will be inefficient for narrow passages on reach-avoid problem. Moreover, in reality, the driven device do not have the capacity to provide continuous control input, but discrete control input, in this case, the configuration robustness should be defined over integer space as follows:
%\begin{equation*}
%\rho(G_{w}(\xi_{G_{w}}),\mathcal{W},\Pi,\kappa)=-\max(s_{0}^{u}+s_{0}^{v},s_{1}^{u}+s_{1}^{v},\cdots,s_{L}^{u}+s_{L}^{v})
%\end{equation*}
%which makes the following constraints:
%\begin{equation*}
%\begin{array}{lll}
%(ICR.1)\quad\mathcal{H}_{\mathcal{X}\rightarrow\mathcal{W}}(x_{k}^{i})\in \kappa_{i}^{\mathcal{W}},\quad i=0,\cdots,L+1\\
%(ICR.2) \quad x_{k}^{i+1}=A_{k}(x_{k})x_{k}^{i}+B_{k}(x_{k})u_{k}^{i}+B_{k}^{'}v_{i},\quad i=0,\cdots,L\\
%(ICR.3) \quad\parallel u_{k}^{i}\parallel \leq \overline{u}_{k}+s_{i}^{u},\qquad i=0,\cdots,L\\
%(ICR.4)\quad u_{k}^{i}=\kappa_{i} \vec{\tau}_{0}, \kappa_{i}\in \mathbb{N}^{m},\vec{\tau}_{0}\in \mathbb{R}^{m}\qquad i=0,\cdots,L+1\\
%(ICR.5) \quad\parallel v_{i}\parallel \leq s_{i}^{v},\quad i=0,\cdots,L\\
%(ICR.6) \quad 0 \leq s_{i}^{v},\quad i=0,\cdots,L\\
%(ICR.7) \quad\frac{\overline{s}}{\epsilon}\left( \sum\limits_{l=0}^{i-1}s_{l}^{u}+s_{l}^{v} \right)\leq s_{i}^{u}+s_{i}^{v},\quad i=1,\cdots,L\\
%\end{array}
%\end{equation*}
%are feasible with lowest upper bound of $s^{v}+s^{u}$.
%Thus the optimization of the configuration robustness has be changed to a mix-integer optimization problem. To solve this problem, many off-the-shelf solvers can be used.

\section{Case Study}

\begin{figure}
\centering
\subfigure[Workspace]{\label{fig:a}\includegraphics[width=0.7\textwidth]{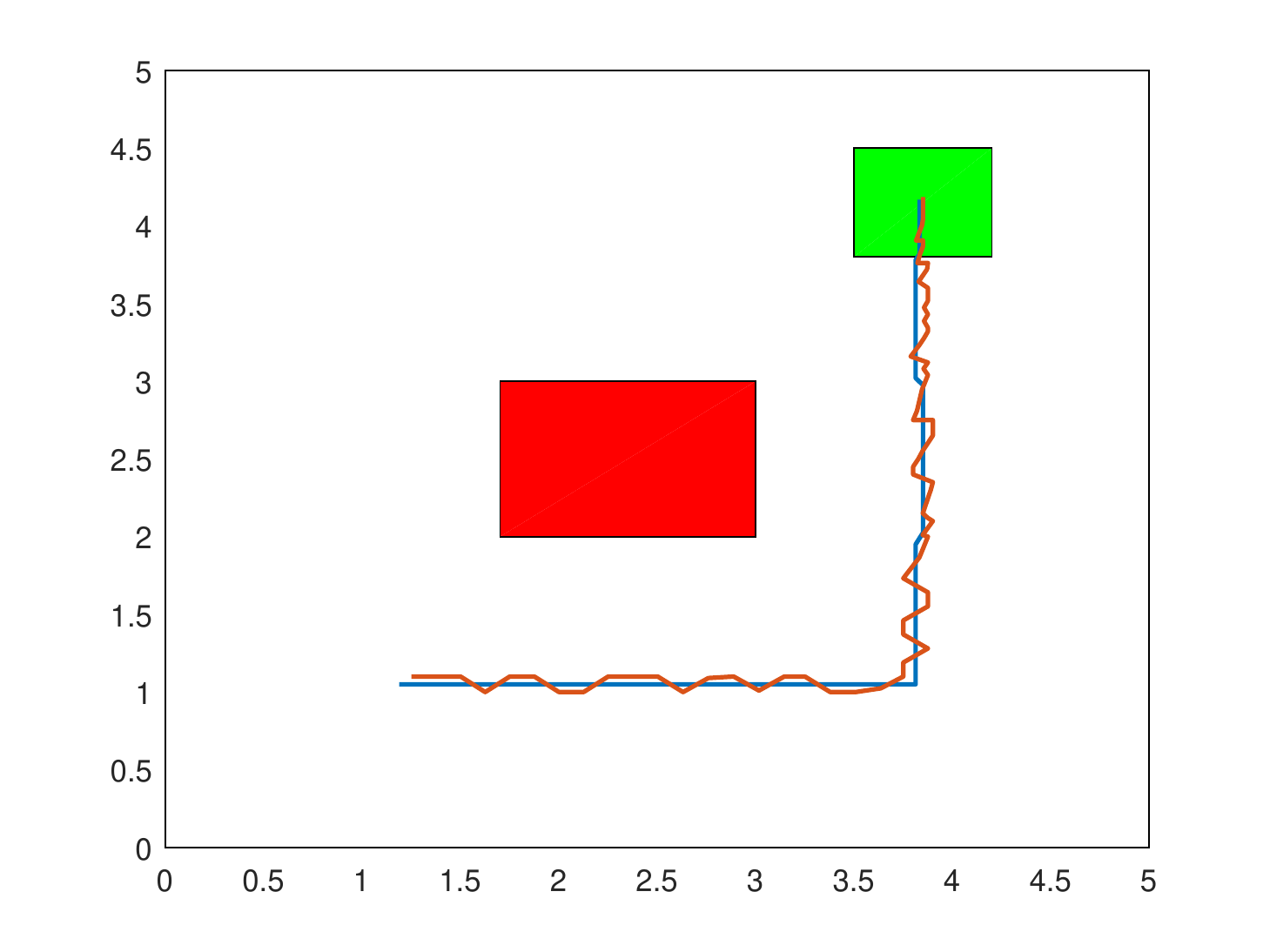}}
\subfigure[Control strategy]{\label{fig:b}\includegraphics[width=0.7\textwidth]{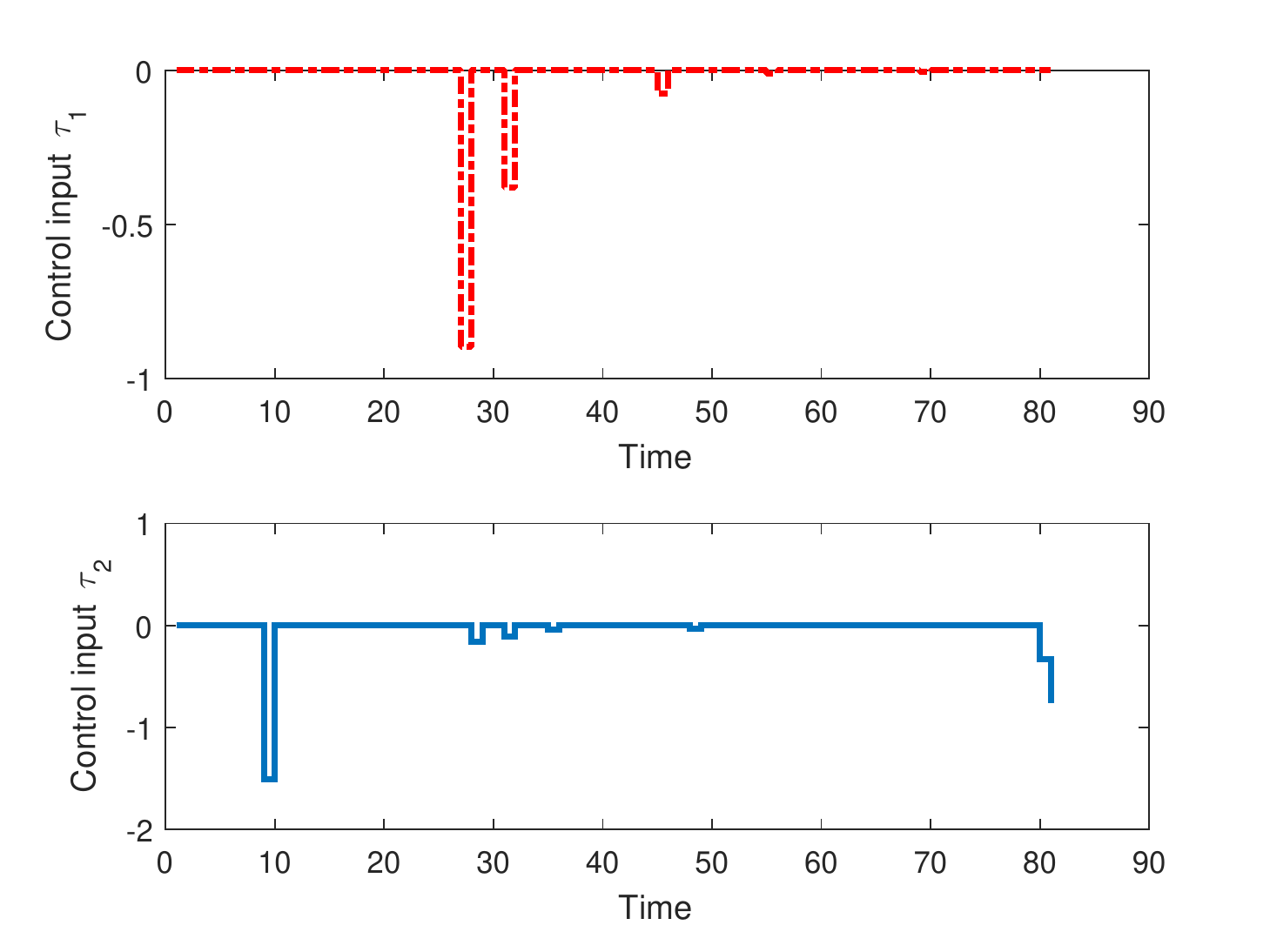}}
\caption{(a) The workspace. The obstacle region is shown in red while the target region is shown in green. The blue line is the path obtained by using the SAT solver while the red line is the actual robot trajectory. (b) The associated control strategy for two links, the first one of length 2.23 and the second one of length 3.35.}
\label{control}
\end{figure}

The following case study is based on the functional reconfiguration grammar $FRG$ constructed in the first three examples and a workspace as shown in Fig. \ref{fig:a}. In the workspace, there is an obstacle region shown in red and a target region shown in green. The two regions can be mathematically described as follows:
\begin{equation*}
\begin{bmatrix}
-1 &0\\
1&0\\
0&-1\\
0&1
\end{bmatrix}
\begin{bmatrix}
x_{o}\\
y_{o}
\end{bmatrix}
\leq 
\begin{bmatrix}
-1.7\\
3\\
-2\\
3\\
\end{bmatrix};
\quad
\begin{bmatrix}
-1 &0\\
1&0\\
0&-1\\
0&1
\end{bmatrix}
\begin{bmatrix}
x_{t}\\
y_{t}
\end{bmatrix}
\leq 
\begin{bmatrix}
-3.5\\
4.2\\
-3.8\\
4.5\\
\end{bmatrix}
\end{equation*}
We further set $\bar{u}$, the bound on the magnitude of the control input, to 10.

Essentially, we are given a set of modules and a workspace; we need to construct a robot by (i) selecting and connecting modules (structural configuration); and (ii) selecting appropriate parameters (in our case, the lengths of a link) for each module (functional configuration), such that the constructed robot is able to steer from the initial region to the target region while avoiding the obstacle. 

Once the workspace has been abstracted ($WS.Abstraction$ in Algorithm 1), the SAT-solver ($P.Planning$ in Algorithm 1) is applied to find a path $\kappa$ as shown by the blue line in Fig. \ref{fig:a}. The path $\kappa$ consists of a sequence 81 rectangular regions, i.e., $k=81$. Associated with each region is a set of linear inequalities.

%For a robot with only one link, once interpreted according to the semantics of FRG, its dynamics can be described by the following matrices: 
%\begin{equation}
%A_{1Q}=
%\begin{bmatrix}
%\frac{l_{1}^{3}}{3}
%\end{bmatrix}
%;
%B_{1}=
%\begin{bmatrix}
%0
%\end{bmatrix}
%\label{onelinkdynamic}
%\end{equation}

It is quite obvious that there is no way for a robot with only one link to move from the starting region to the target region without hitting the obstacle. We are able to confirm this observation by using our algorithm. Basically, the linear program algorithm and the active learning algorithm, GP-ACB, are combined to solve the parameter synthesis problem ($C.Synthesis$ in Algorithm 1) and we are unable to get a solution, i.e., a parameter resulting positive configuration robustness.

Thus, the structural configuration of the robot is relaxed. According to the production rules, the structural reconfiguration automaton (SRA) will transit to the next state, which corresponds a robot with two links. According the semantics of the corresponding functional reconfiguration grammar (FRG), the dynamics of the two-linked manipulator is as follows:
\begin{equation}
\frac{d}{dt}
\begin{bmatrix}
\vartriangle\theta_1\\
\vartriangle\theta_{2}\\
\vartriangle\dot{\theta}_1\\
\vartriangle\dot{\theta}_2\\
\end{bmatrix}
=
\left[ 
\begin{array}{cccc}
0&0&1&0\\
0&0&0&1\\
\multicolumn{2}{c}{\multirow{2}{*}{-E/D}} &0&0\\
\multicolumn{2}{c}{}&0&0\\
\end{array}
\right] 
\begin{bmatrix}
\vartriangle\theta_1\\
\vartriangle\theta_{2}\\
\vartriangle\dot{\theta}_1\\
\vartriangle\dot{\theta}_2\\
\end{bmatrix}
+
\left[ 
\begin{array}{cccc}
0&0\\
0&0\\
\multicolumn{2}{c}{\multirow{2}{*}{1/D}}\\
\multicolumn{2}{c}{}\\
\end{array}
\right] 
\begin{bmatrix}
\vartriangle\tau_{1}\\
\vartriangle\tau_{2}\\
\end{bmatrix}
\label{twolinkdynamic}
\end{equation}
with 
\begin{equation*}
D=
\begin{bmatrix}
\frac{l_{1}^{3}+l_{2}^{3}+3l_{1}^{2}l_{2}}{3}+l_{2}^{2}l_{1}cos(\theta_{2})&\frac{l_{1}^{3}}{3}+\frac{l_{2}^{2}l_{1}}{2}cos(\theta_{2})\\
\frac{l_{1}^{3}}{3}+\frac{l_{2}^{2}l_{1}}{2}cos(\theta_{2})&\frac{l_{2}^{3}}{3}
\end{bmatrix}
\end{equation*}
\begin{equation*}
E=
\begin{bmatrix}
0& 0\\
0&0\\
\end{bmatrix}.
\end{equation*}

\begin{figure}[hbtp]
\centering
\includegraphics[width=0.7\textwidth]{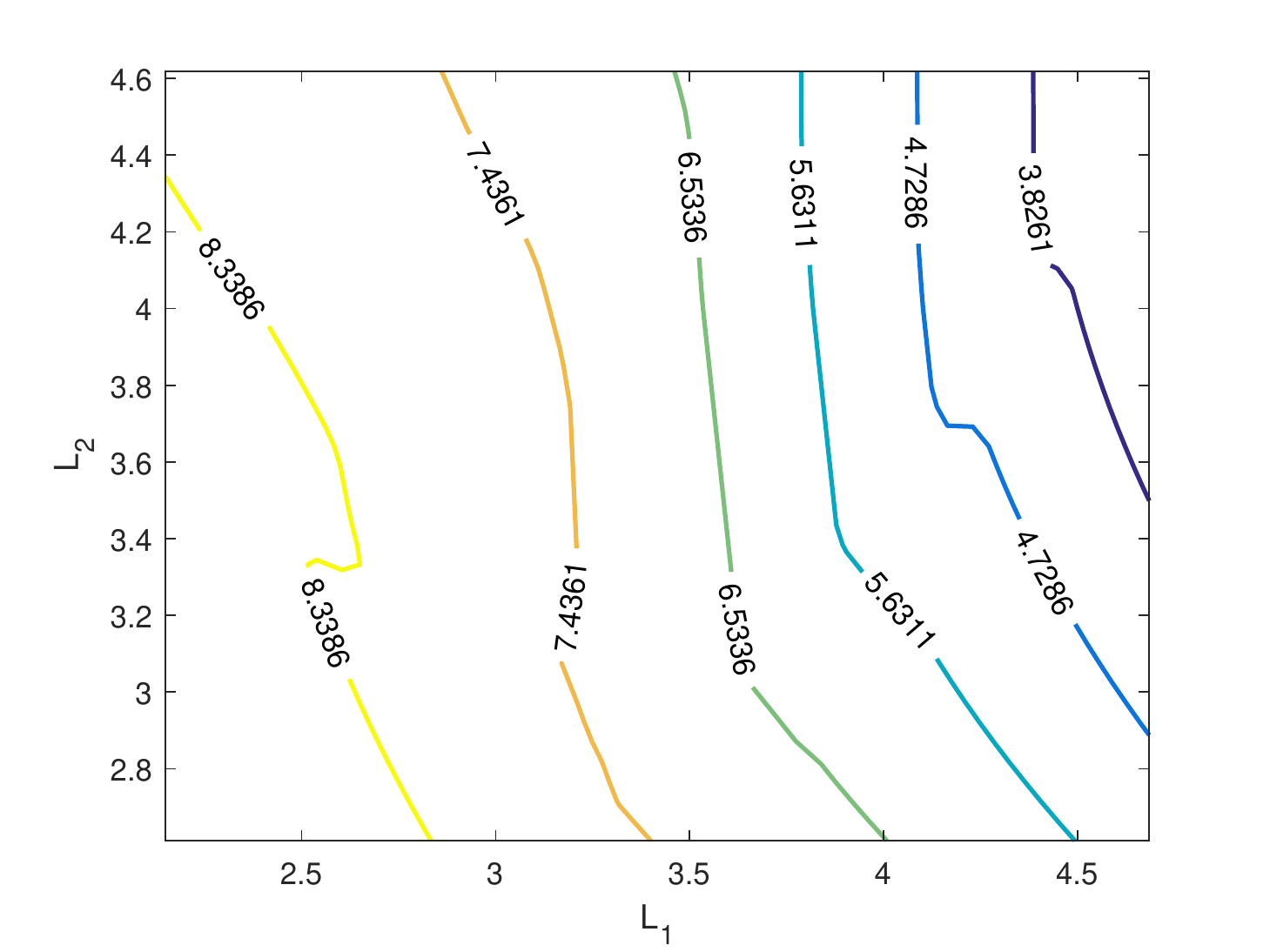}
\caption{The relationship between the configuration robustness and the two parameters $L_1$ and $L_2$ for a robot with two links.}
\label{linkvsrobust}
\end{figure}

Then a parameter synthesis problem ($C.Synthesis$ in Algorithm 1) is solved and we are able to find a solution as shown by the red trajectory as shown in Fig. \ref{fig:b}. The corresponding control policy is shown in Fig. \ref{fig:b}. The parameters are found by using the GP-ACB algorithm to optimize the configuration robustness over the two parameters, $L_1$ and $L_2$, the lengths of the two links (their relationship is shown in Fig. \ref{linkvsrobust}.). They are $\theta_1^* = L_1^* = 2.23$ and $\theta_2^* = L_2^*= 3.35$, respectively.

\section{Conclusion}

The paper presents a new way of modeling and designing reconfigurable robots. We propose a series of concepts, including structural reconfigurable grammar,  structural reconfigurable automaton, and functional reconfigurable grammar, to formally characterize how a reconfigurable robot can be configured and re-configured not only structurally but also functionally. Furthermore, we propose a correct-by-construction design strategy of utilizing such models. We demonstrate with a planar multi-link manipulator and a pick-and-place task as an example to show how such a strategy works.

\bibliography{references}

% Generated by IEEEtran.bst, version: 1.13 (2008/09/30)
\begin{thebibliography}{10}
\providecommand{\url}[1]{#1}
\csname url@samestyle\endcsname
\providecommand{\newblock}{\relax}
\providecommand{\bibinfo}[2]{#2}
\providecommand{\BIBentrySTDinterwordspacing}{\spaceskip=0pt\relax}
\providecommand{\BIBentryALTinterwordstretchfactor}{4}
\providecommand{\BIBentryALTinterwordspacing}{\spaceskip=\fontdimen2\font plus
\BIBentryALTinterwordstretchfactor\fontdimen3\font minus
  \fontdimen4\font\relax}
\providecommand{\BIBforeignlanguage}[2]{{%
\expandafter\ifx\csname l@#1\endcsname\relax
\typeout{** WARNING: IEEEtran.bst: No hyphenation pattern has been}%
\typeout{** loaded for the language `#1'. Using the pattern for}%
\typeout{** the default language instead.}%
\else
\language=\csname l@#1\endcsname
\fi
#2}}
\providecommand{\BIBdecl}{\relax}
\BIBdecl

\bibitem{yim2007modular}
M.~Yim, W.-M. Shen, B.~Salemi, D.~Rus, M.~Moll, H.~Lipson, E.~Klavins, and
  G.~S. Chirikjian, ``Modular self-reconfigurable robot systems [grand
  challenges of robotics],'' \emph{IEEE Robotics \& Automation Magazine},
  vol.~14, no.~1, pp. 43--52, 2007.

\bibitem{ahmadzadeh2016modular}
H.~Ahmadzadeh, E.~Masehian, and M.~Asadpour, ``Modular robotic systems:
  Characteristics and applications,'' \emph{Journal of Intelligent \& Robotic
  Systems}, vol.~81, no. 3-4, pp. 317--357, 2016.

\bibitem{harada2009wireless}
K.~Harada, E.~Susilo, A.~Menciassi, and P.~Dario, ``Wireless reconfigurable
  modules for robotic endoluminal surgery,'' in \emph{Robotics and Automation,
  2009. ICRA'09. IEEE International Conference on}.\hskip 1em plus 0.5em minus
  0.4em\relax IEEE, 2009, pp. 2699--2704.

\bibitem{satici2009design}
A.~C. Satici, A.~Erdogan, and V.~Patoglu, ``Design of a reconfigurable ankle
  rehabilitation robot and its use for the estimation of the ankle impedance,''
  in \emph{2009 IEEE International Conference on Rehabilitation
  Robotics}.\hskip 1em plus 0.5em minus 0.4em\relax IEEE, 2009, pp. 257--264.

\bibitem{farid2014axiomatic}
A.~M. Farid and L.~Ribeiro, ``An axiomatic design of a multi-agent
  reconfigurable manufacturing system architecture,'' in \emph{International
  Conference on Axiomatic Design}, 2014, pp. 1--8.

\bibitem{eckenstein2015modular}
N.~Eckenstein and M.~Yim, ``Modular reconfigurable robotic systems: Lattice
  automata,'' in \emph{Robots and Lattice Automata}.\hskip 1em plus 0.5em minus
  0.4em\relax Springer, 2015, pp. 47--75.

\bibitem{neubert2016soldercubes}
J.~Neubert and H.~Lipson, ``Soldercubes: a self-soldering self-reconfiguring
  modular robot system,'' \emph{Autonomous Robots}, vol.~40, no.~1, pp.
  139--158, 2016.

\bibitem{sung2015reconfiguration}
C.~Sung, J.~Bern, J.~Romanishin, and D.~Rus, ``Reconfiguration planning for
  pivoting cube modular robots,'' in \emph{2015 IEEE International Conference
  on Robotics and Automation (ICRA)}.\hskip 1em plus 0.5em minus 0.4em\relax
  IEEE, 2015, pp. 1933--1940.

\bibitem{park2008automatic}
M.~Park, S.~Chitta, A.~Teichman, and M.~Yim, ``Automatic configuration
  recognition methods in modular robots,'' \emph{The International Journal of
  Robotics Research}, vol.~27, no. 3-4, pp. 403--421, 2008.

\bibitem{freitas2010kinematic}
G.~Freitas, G.~Gleizer, F.~Lizarralde, L.~Hsu, and N.~R.~S. dos Reis,
  ``Kinematic reconfigurability control for an environmental mobile robot
  operating in the amazon rain forest,'' \emph{Journal of Field Robotics},
  vol.~27, no.~2, pp. 197--216, 2010.

\bibitem{ferguson2007flexible}
S.~Ferguson, A.~Siddiqi, K.~Lewis, and O.~L. de~Weck, ``Flexible and
  reconfigurable systems: Nomenclature and review,'' in \emph{ASME 2007
  International Design Engineering Technical Conferences and Computers and
  Information in Engineering Conference}.\hskip 1em plus 0.5em minus
  0.4em\relax American Society of Mechanical Engineers, 2007, pp. 249--263.

\bibitem{balmaceda2016novel}
A.~L. Balmaceda-Santamar{\'\i}a, E.~Castillo-Castaneda, and
  J.~Gallardo-Alvarado, ``A novel reconfiguration strategy of a delta-type
  parallel manipulator,'' \emph{International Journal of Advanced Robotic
  Systems}, vol.~13, 2016.

\bibitem{cucu2015towards}
L.~Cucu, M.~Rubenstein, and R.~Nagpal, ``Towards self-assembled structures with
  mobile climbing robots,'' in \emph{2015 IEEE International Conference on
  Robotics and Automation (ICRA)}.\hskip 1em plus 0.5em minus 0.4em\relax IEEE,
  2015, pp. 1955--1961.

\bibitem{djuric2010global}
A.~M. Djuric, R.~Al~Saidi, and W.~ElMaraghy, ``Global kinematic model
  generation for n-dof reconfigurable machinery structure,'' in \emph{2010 IEEE
  International Conference on Automation Science and Engineering}.\hskip 1em
  plus 0.5em minus 0.4em\relax IEEE, 2010, pp. 804--809.

\bibitem{reiter2015distributed}
F.~Reiter, ``Distributed graph automata,'' in \emph{Proceedings of the 2015
  30th Annual ACM/IEEE Symposium on Logic in Computer Science (LICS)}.\hskip
  1em plus 0.5em minus 0.4em\relax IEEE Computer Society, 2015, pp. 192--201.

\bibitem{aho1979introduction}
A.~Aho and J.~D. Ullman, ``Introduction to automata theory, languages and
  computation,'' 1979.

\bibitem{baier2008principles}
C.~Baier, J.-P. Katoen, and K.~G. Larsen, \emph{Principles of model
  checking}.\hskip 1em plus 0.5em minus 0.4em\relax MIT press, 2008.

\bibitem{dantam2013motion}
N.~Dantam and M.~Stilman, ``The motion grammar: Analysis of a linguistic method
  for robot control,'' \emph{IEEE Transactions on Robotics}, vol.~29, no.~3,
  pp. 704--718, 2013.

\bibitem{chen2001dynamic}
W.~Chen, ``Dynamic modeling of multi-link flexible robotic manipulators,''
  \emph{Computers \& Structures}, vol.~79, no.~2, pp. 183--195, 2001.

\bibitem{kong2016temporal}
Z.~Kong, A.~Jones, and C.~Belta, ``Temporal logics for learning and detection
  of anomalous behavior,'' \emph{IEEE Transactions on Automatic Control}, in
  press.

\bibitem{aksaray2016q}
D.~Aksaray, A.~Jones, Z.~Kong, M.~Schwager, and C.~Belta, ``Q-learning for
  robust satisfaction of signal temporal logic specifications,'' in
  \emph{Decision and Control (CDC), 2016 IEEE 55th Conference on}.\hskip 1em
  plus 0.5em minus 0.4em\relax IEEE, 2016, pp. 6565--6570.

\bibitem{ding2011automatic}
X.~C. Ding, M.~Kloetzer, Y.~Chen, and C.~Belta, ``Automatic deployment of
  robotic teams,'' \emph{IEEE Robotics \& Automation Magazine}, vol.~18, no.~3,
  pp. 75--86, 2011.

\bibitem{shoukryscalable}
Y.~Shoukry, P.~Nuzzo, I.~Saha, A.~L. Sangiovanni-Vincentelli, S.~A. Seshia,
  G.~J. Pappas, and P.~Tabuada, ``Scalable motion planning using lazy smt-based
  solving,'' \emph{pdfs.semanticscholar.org}, 2016.

\bibitem{komuravelli2014smt}
A.~Komuravelli and et.al, ``Smt-based model checking for recursive programs,''
  in \emph{International Conference on Computer Aided Verification}.\hskip 1em
  plus 0.5em minus 0.4em\relax Springer, 2014, pp. 17--34.

\bibitem{haghighi2015spatel}
I.~Haghighi, A.~Jones, Z.~Kong, E.~Bartocci, R.~Gros, and C.~Belta, ``Spatel: a
  novel spatial-temporal logic and its applications to networked systems,'' in
  \emph{Proceedings of the 18th International Conference on Hybrid Systems:
  Computation and Control}.\hskip 1em plus 0.5em minus 0.4em\relax ACM, 2015,
  pp. 189--198.

\bibitem{jin2013mining}
X.~Jin, A.~Donz{\'e}, J.~V. Deshmukh, and S.~A. Seshia, ``Mining requirements
  from closed-loop control models,'' in \emph{Proceedings of the 16th
  international conference on Hybrid systems: computation and control}.\hskip
  1em plus 0.5em minus 0.4em\relax ACM, 2013, pp. 43--52.

\bibitem{kong2014temporal}
Z.~Kong, A.~Jones, A.~Medina~Ayala, E.~Aydin~Gol, and C.~Belta, ``Temporal
  logic inference for classification and prediction from data,'' in
  \emph{Proceedings of the 17th international conference on Hybrid systems:
  computation and control}.\hskip 1em plus 0.5em minus 0.4em\relax ACM, 2014,
  pp. 273--282.

\bibitem{chen2016active}
G.~Chen, Z.~Sabato, and Z.~Kong, ``Active learning based requirement mining for
  cyber-physical systems,'' in \emph{Decision and Control (CDC), 2016 IEEE 55th
  Conference on}.\hskip 1em plus 0.5em minus 0.4em\relax IEEE, 2016, pp.
  4586--4593.

\end{thebibliography}
\bibliographystyle{IEEEtran}

\section*{Appendix}
\textit{THEOREM \ref{node} Given two structural configurations $\omega_1$ and $\omega_2$ with $\omega_1=\omega_2 z$, i.e., $\omega_2$ is a prefix of $\omega_1$, then the following relationship holds:
\begin{equation*}
\rho(\omega_1, \theta_{\omega_1}^*, \mathcal{W}, \bar{x}, \phi) \geq \rho(\omega_2, \theta_{\omega_2}^*, \mathcal{W}, \bar{x}, \phi) 
\end{equation*}
where $\theta_{\omega_1}^*$ and $\theta_{\omega_2}^*$ are the optimal parameters for the two structural configurations $\omega_1$ and $\omega_2$, respectively, in term of configuration robustness. }

\begin{proof}
Since $\omega_1=\omega_2 z$, the parameter space $\Theta_{\omega_2}$ for $\omega_2$ is a subset of the parameters space $\Theta_{\omega_1}$ for $\omega_1$, namely $\Theta_{\omega_2} \subseteq \theta_{\omega_1}$. This implies that given any parameter $\theta_{\omega_1}^* \in \Theta_{\omega_1}$, there exists a parameter $\theta_{\omega_2} \in \Theta_{\omega_2}$ such that $\theta_{\omega_2}=[\theta_{\omega_1}^*,0]$, i.e., $\rho(\omega_1, \theta_{\omega_1}^*, \mathcal{W}, \bar{x}, \phi) = \rho(\omega_2, \theta_{\omega_2}, \mathcal{W}, \bar{x}, \phi)$. Furthermore, given the definition of $\theta_{\omega_2}^*$, we have $\rho(\omega_2, \theta_{\omega_2}^*, \mathcal{W}, \bar{x}, \phi) \geq \rho(\omega_2, \theta_{\omega_2}, \mathcal{W}, \bar{x}, \phi)$. Therefore the theorem has been proved.
\end{proof}

\textit{THEOREM \ref{robustness} Given a functional reconfiguration grammar $FRG=(N,Z,P,F,I)$, a workspace description $\mathcal{W}=(P,\mathcal{H},N)$, and an initial configuration $q_0$ of the FRG, there is a solution to Problem (\ref{problem_1}) if and only if there exists an $\omega$, a $\theta_{\omega}$, and a trajectory $\bar{x}$ generated by the robot built in accordance with $\omega$ and $\theta_{\omega}$, such that  
\begin{equation*}
\rho(\omega, \theta_{\omega}, \mathcal{W}, \bar{x}, \phi) \geq 0
\end{equation*}}

\begin{proof}
Here we first introduce a function $\texttt{ZEROPREFIX}_{\epsilon}:\mathbb{R}_{\geq 0}^{k}\rightarrow \mathbb{N}$ which is defined as follows \cite{shoukryscalable}:
\begin{equation}
\texttt{ZEROPREFIX}_{\epsilon}(g_0,g_1,\cdots,g_{k-1})=\min\quad m \quad s.t.\sum\limits_{j=0}^{m} g_{j} > \epsilon.
\end{equation}
where $\epsilon\in \mathbb{R}_{>0}$ is a constant value. It is quite obvious that the function $\texttt{ZEROPREFIX}_{\epsilon}$ returns the number of zero elements at the beginning of a sequences $g=g_0,g_1,\cdots,g_{k-1}$. 

Next we will present two problems. The first one is related to the function $\texttt{ZEROPREFIX}_{\epsilon}$: 
\begin{prob}
\label{maxzero}
\begin{equation*}
\begin{array}{llll}
\max\limits_{
\begin{array}{ll}
u_0,\cdots,u_{k-1}\in \mathbb{R}^m\\
v_0,\cdots,v_{k-1}\in \mathbb{R}^m\\
g_{0}^{u},\cdots,g_{k-1}^{u}\in \mathbb{R}\\
g_{0}^{v},\cdots,g_{k-1}^{v}\in \mathbb{R}\\
x_{1},\cdots,x_{k}\in\mathbb{R}^{n}
\end{array}}
\texttt{ZEROPREFIX}_{\epsilon}((g_{0}^{u}+g_{0}^{v}),\cdots,(g_{k-1}^{u}+g_{k-1}^{v}))\\
\textit{subject to :}\\
(CA.1) \quad \mathcal{H}(\bar{x})\vDash \phi\\
(CA.2) \quad x_{i+1}=A(x_i)x_{i}+B(x_{i})u_{i}+B^{'}v_{i},\quad i=0,\cdots,k-1\\
(CA.3) \quad \parallel u_{i}\parallel \leq \overline{u}+g_{i}^{u},\quad i=0,\cdots,k-1\\
(CA.4) \quad \parallel v_{i}\parallel \leq g_{i}^{v},\quad i=0,\cdots,k-1\\
(CA.5) \quad 0 \leq g_{i}^{u},0 \leq g_{i}^{v},\quad i=0,\cdots,k-1\\
\end{array}
\end{equation*}
\end{prob}
where $g_{i}^{u}, g_{i}^{v}, i = 0,...,k-1$ are slack variables as those defined in Definition \ref{configurationrobustness}. 

In paper \cite{shoukryscalable}, the authors have shown that solving the above problem can be converted to solving the following problem:
\begin{prob}
\label{minslack}
\begin{equation*}
\begin{array}{llll}
f_{k}=\qquad\min\limits_{
\begin{array}{ll}
u_0,\cdots,u_{k-1}\in \mathbb{R}^m\\
v_0,\cdots,v_{k-1}\in \mathbb{R}^m\\
g_{0}^{u},\cdots,g_{k-1}^{u}\in \mathbb{R}\\
g_{0}^{v},\cdots,g_{k-1}^{v}\in \mathbb{R}\\
x_{1},\cdots,x_{k}\in\mathbb{R}^{n}
\end{array}}
\sum\limits_{i=0}^{k-1} g_{i}^{u}+g_{i}^{v}\\
\textit{subject to :}\\
(CB.1) \quad \mathcal{H}(\bar{x})\vDash \phi\\
(CB.2) \quad x_{i+1}=A(x_i)x_{i}+B(x_{i})u_{i}+B^{'}v_{i},\quad i=0,\cdots,k-1\\
(CB.3) \quad \parallel u_{i}\parallel \leq \overline{u}+g_{i}^{u},\quad i=0,\cdots,k-1\\
(CB.4) \quad \parallel v_{i}\parallel \leq g_{i}^{v},\quad i=0,\cdots,k-1\\
(CB.5) \quad 0 \leq g_{i}^{u},0 \leq g_{i}^{v},\quad i=0,\cdots,k-1\\
(CB.6) \quad \epsilon\left( \sum\limits_{l=0}^{j-1}g_{l}^{u}+g_{l}^{v} \right)\leq g_{j}^{u}+g_{j}^{v},\quad i=1,\cdots,k\\
\end{array}
\end{equation*}
\end{prob}
Specifically, \cite{shoukryscalable} proved that Problem \ref{maxzero} and Problem \ref{minslack} are equivalent, namely, any solution to Problem \ref{maxzero} is also a solution of Problem \ref{minslack}. Based on this conclusion, we can get the following lemma.

\begin{lem}
\label{robustnesszero}
If $\rho(\omega, \theta_{\omega}, \mathcal{W}, \bar{x}, \phi) \geq 0$ with and the corresponding control sequence as $u=u_{0}u_{1}\cdots u_{k-1}$, i.e., applying the $u$ to the robot built in accordance $(\omega, \theta_{\omega}$, then the control sequence $u$ is a optimal solution to Problem \ref{minslack} and $g_{i}^{u}+g_{i}^{v}=0, \forall i=0,1,\cdots,k-1$.
\end{lem}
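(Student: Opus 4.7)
The plan is to use the one-sided slack constraint (C.5) together with the chained bound (C.6) and the hypothesis $\rho \geq 0$ to force both slacks to vanish, and then transport the resulting zero-slack triple into Problem \ref{minslack}.

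First I would unfold what $\rho \geq 0$ provides: an outer maximizer $(u,v)$ and tight inner slacks $(s^u,s^v)$ that satisfy (C.1)--(C.6) and attain $\max_i(s_i^u+s_i^v)\leq 0$. Tightness of the lower bounds in (C.3), (C.4), (C.5) at the inner extremum fixes $s_i^v=\|v_i\|$ and $s_i^u=\|u_i\|-\bar{u}$. I would then run an induction on $i$ using (C.6) to show $s_i^u+s_i^v\geq 0$: the base case follows from tightness alone at $i=0$ combined with (C.5), and the inductive step uses $\epsilon>0$ on the non-negative accumulated sum. Combined with the upper bound $\max_i(s_i^u+s_i^v)\leq 0$, this yields $s_i^u+s_i^v=0$ for every $i$.

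Next I would argue that the identity $s_i^u+s_i^v=0$ forces $v_i=0$ and $\|u_i\|\leq\bar{u}$: with tightness in (C.3) and (C.4) it reads $\|u_i\|+\|v_i\|-\bar{u}=0$, and any positive $\|v_i\|$ would contradict the outer maximality of $(u,v)$, since $v_i$ can be perturbed toward zero in the outer max without worsening any constraint. The dynamics (C.2) then simplify to $x_{i+1}=Ax_i+Bu_i$, so $u$ alone realizes the trajectory $\bar{x}$, matching the lemma's hypothesis that $u$ is applied to the robot.

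Finally I would evaluate Problem \ref{minslack} at $(u,v=0,g_i^u=0=g_i^v,x=\bar{x})$: (CB.1) and (CB.2) are inherited from (C.1) and (C.2) after deleting the $B'v$ term, (CB.3) collapses to $\|u_i\|\leq\bar{u}$, and (CB.4)--(CB.6) hold trivially at zero slacks. The objective $\sum_i(g_i^u+g_i^v)=0$ is a global minimum since (CB.5) bounds it below by $0$, so $u$ is optimal and $g_i^u+g_i^v=0$ everywhere. The main obstacle I foresee is pinning down $v_i=0$ cleanly in the second paragraph: a naive positive $v_i$ can be paired with a negative $s_i^u$, and the argument requires a careful handling of how the inner and outer extrema in Definition \ref{configurationrobustness} interact, likely formalized by rewriting the inner max as the pointwise minimum of tight slacks and invoking monotonicity in each $(s_i^u,s_i^v)$.
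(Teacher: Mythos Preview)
Your proposal is more elaborate than the paper's own argument and attempts to justify steps the paper leaves implicit. The paper's proof is short: from $\rho \geq 0$ it reads off $s_i^u + s_i^v \leq 0$ for every $i$ directly from the outer $-\max$, remarks that the constraint set of Definition~\ref{configurationrobustness} differs from that of Problem~\ref{minslack} only in that one slack may be negative there while $g_i^u,g_i^v \geq 0$ here, and then simply asserts that the same control $u$ is feasible in Problem~\ref{minslack} with $g_i^u + g_i^v = 0$, hence $f_k = 0$, which is optimal because (CB.5) forces $f_k \geq 0$. No induction on (C.6) appears, and no separate $v_i = 0$ argument is made.

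Two remarks on your route. First, the induction you propose to show $s_i^u + s_i^v \geq 0$ is both unnecessary and has a gap at the base case: constraint (C.5) bounds only $s_i^v \geq 0$, not $s_i^u$, so the tight value at $i=0$, namely $(\|u_0\|-\bar{u}) + \|v_0\|$, may well be negative, and (C.6) only kicks in from $i=1$. You do not need the induction at all: $s_i^u + s_i^v \leq 0$ already follows from $\rho \geq 0$, and together with $s_i^v \geq 0$ it gives $s_i^u \leq 0$, hence $\|u_i\| \leq \bar{u} + s_i^u \leq \bar{u}$ immediately, which is what you need to set $g_i^u = 0$ in Problem~\ref{minslack}.

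Second, you are right that pinning down $v_i = 0$ is the delicate point; the paper does not address it and effectively identifies only $u$ (not the pair $(u,v)$) as what is transported to Problem~\ref{minslack}, treating the $B'v$ contribution as absent once the slacks vanish. Your outer-maximality sketch is a reasonable way to try to close this, but be aware that it goes beyond what the paper actually proves; the paper's argument at that step is informal.
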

\begin{proof}
If the configuration robustness satisfies $\rho(\omega, \theta_{\omega}, \mathcal{W}, \bar{x}, \phi) \geq 0$, then according to the definition of configuration robustness, Eqn. (\ref{dfn:robustness}), we have $s_i^u+s_i^v \leq 0$ for all $i=0,...,k-1$. Since the constraints in Definition \ref{configurationrobustness} is stronger than the constraints in Problem \ref{minslack} (the only difference between the two sets of constraints is between $u_i^v$ and $g_i^v$, the former can be negative in Definition \ref{configurationrobustness} while the latter must be non-negative in Problem \ref{minslack}), i.e., any control sequence $u=u_{0}u_{1}\cdots u_{k-1}$ satisfying constraints in Definition \ref{configurationrobustness} will definitely satisfies the constraints in Problem \ref{minslack}. Thus a control sequence $u$ meeting the constraints of Problem \ref{minslack} must satisfy $g_{i}^{u}+g_{i}^{v}=0$ for all $i=0,...,k-1$, which leads subsequently to $f_{k}=0$. From the definition of $f_{k}$ in Problem \ref{minslack}, we have $f_{k} \geq 0$. Thus $f_{k}=0$ is the optimal solution to Problem \ref{minslack} and we have proved the lemma.
\end{proof}

With Lemma \ref{robustnesszero}, we are ready to prove Theorem \ref{robustness}\\

\textbf{Necessity:} Checking whether there exists a control sequence $u=u_{0}u_{1}\cdots u_{k-1}$, which will generate a feasible trajectory $\bar{x}$ to solve Problem \ref{problem_1} can be converted to checking whether there exists a solution to the following problem: 
\begin{prob}
\label{feasible}
\begin{equation*}
\begin{array}{llll}
\min\limits_{
\begin{array}{ll}
u_0,\cdots,u_{i-1}\in \mathbb{R}^m\\
x_{1},\cdots,x_{i}\in\mathbb{R}^{n}
\end{array}}
\quad 1\\
\textit{subjec to :}\\
(CC.1)\qquad\quad\mathcal{H}(\bar{x})\vDash \phi\\
(CC.2)\qquad\quad x_{i+1}=A(x_i)x_{i}+B(x_i)u_{i},\quad i=0,\cdots,k-1\\
(CC.3)\qquad\quad \parallel u_{i}\parallel \leq \overline{u},\quad i=0,\cdots,k-1\\
\end{array}
\end{equation*}
\end{prob}
When there is a feasible trajectory $\bar{x}$, then $g_{i}^{u}+g_{i}^{v}=0, \forall i=0,1,\cdots,k-1$ can be obtained in Problem \ref{minslack}; subsequently $s_{i}^{u}+s_{i}^{v}=0,  \forall i=0,1,\cdots,k-1$ can meet all the constraints in the definition of configuration robustness, which makes $\rho(\omega, \theta_{\omega}, \mathcal{W}, \bar{x}, \phi) = 0$. Therefore $\rho(\omega, \theta_{\omega}, \mathcal{W}, \bar{x}, \phi)  \geq 0$.

\textbf{Sufficiency:} From Lemma \ref{robustnesszero}, if we have $\rho(\omega, \theta_{\omega}, \mathcal{W}, \bar{x}, \phi) \geq 0$, then 
$g_{i}^{u}+g_{i}^{v}=0, \forall i=0,1,\cdots,k-1$ is a solution to Problem \ref{minslack}; subsequently, there is a feasible solution Problem \ref{feasible}, which is also a solution to Problem \ref{problem_1}.
\end{proof}

\end{document}